\newcommand{\DS}{$\mathcal{D}_S$}
\newcommand{\DT}{$\mathcal{D}_T$}
\def\figref#1{figure~\ref{#1}}
\def\secref#1{section~\ref{#1}}
\def\eqref#1{equation~\ref{#1}}
\def\1{\bm{1}}
\DeclareMathAlphabet{\mathsfit}{\encodingdefault}{\sfdefault}{m}{sl}
\SetMathAlphabet{\mathsfit}{bold}{\encodingdefault}{\sfdefault}{bx}{n}
\newcommand{\R}{\mathbb{R}}
\newtheorem{theorem}{Theorem}[section]
\newtheorem{lemma}[theorem]{Lemma}
\newtheorem{definition}[theorem]{Definition}
\newcommand{\reals}{\mathbb{R}}
\newcommand\revision[1]{\textcolor{black}{#1}}
\newcommand{\be}{\mathbf{e}}
\newcommand{\bx}{\mathbf{x}}
\newcommand{\bw}{\mathbf{w}}
\newcommand{\bb}{\mathbf{b}}
\newcommand{\bu}{\mathbf{u}}
\newcommand{\bz}{\mathbf{z}}
\newcommand{\bh}{\mathbf{h}}
\newcommand{\by}{\mathbf{y}}
\renewcommand{\secref}[1]{Sec.~\ref{#1}}
\renewcommand{\figref}[1]{Fig.~\ref{#1}}
\renewcommand{\eqref}[1]{Eq.~(\ref{#1})}
\newcommand{\lemref}[1]{Lemma~\ref{#1}}
\newcommand{\thmref}[1]{Thm.~\ref{#1}}
\newcommand{\appref}[1]{Appendix~\ref{#1}}
\icmltitlerunning{From Local Structures to Size Generalization in Graph Neural Networks}
\begin{document}

\twocolumn[
\icmltitle{From Local Structures to Size Generalization in Graph Neural Networks}




\begin{icmlauthorlist}
\icmlauthor{Gilad Yehudai}{nvidia}
\icmlauthor{Ethan Fetaya}{biu}
\icmlauthor{Eli Meirom}{nvidia}
\icmlauthor{Gal Chechik}{nvidia,biu}
\icmlauthor{Haggai Maron}{nvidia}
\end{icmlauthorlist}

\icmlaffiliation{nvidia}{NVIDIA}
\icmlaffiliation{biu}{Bar-Ilan University}

\icmlcorrespondingauthor{Gilad Yehudai}{gilad.yehudai@weizmann.ac.il}

\icmlkeywords{Machine Learning, ICML}

\vskip 0.3in
]



\printAffiliationsAndNotice{}  

\begin{abstract}
Graph neural networks (GNNs) can process graphs of different sizes, but their ability to generalize across sizes, specifically from small to large graphs, is still not well understood. 
In this paper, we identify an important type of data where generalization from small to large graphs is challenging: graph distributions for which the local structure depends on the graph size. This effect occurs in multiple important graph learning domains, including social and biological networks. We first prove that when there is a difference between the local structures, GNNs are not guaranteed to generalize across sizes: there are "bad" global minima that do well on small graphs but fail on large graphs. We then study the size-generalization problem empirically and demonstrate that when there is a discrepancy in local structure, GNNs tend to converge to non-generalizing solutions. Finally, we suggest two approaches for improving size generalization, motivated by our findings. Notably, we propose a novel Self-Supervised Learning (SSL) task aimed at learning meaningful representations of local structures that appear in large graphs. Our SSL task improves classification accuracy on several popular datasets.

\end{abstract}

\section{Introduction}





Graphs are a flexible representation, widely used for representing diverse data and phenomena. In recent years, graph neural networks (GNNs), deep models that operate over graphs, have become the leading approach for learning on graph-structured data \citep{bruna2013spectral,kipf2016semi,velivckovic2017graph,gilmer2017neural}.

In many domains, graphs vary significantly in size. This is the case in molecular biology, where molecules are represented as graphs and their sizes vary from few-atom compounds to proteins with  thousands of nodes. 
Graph sizes are even more heterogeneous in social networks, {ranging from dozens of nodes to  billions of nodes. 
Since a key feature of GNNs is that they can operate on graphs regardless of their size,} 
a fundamental question arises: \textbf{"When do GNNs generalize to graphs of sizes that were not seen during training?"}. 



Aside from being an intriguing theoretical question, the size-generalization problem has important practical implications. In many domains, it is hard to {collect ground-truth labels for} large graphs. For instance, many combinatorial optimization problems can be represented as graph {classification} problems, but labeling large graphs for training may require solving large and hard optimization problems. In other domains, it is often very hard for human annotators to correctly label complex networks. It would therefore be highly valuable to develop techniques that can train on small graphs and generalize to larger graphs. This first requires that we develop an understanding of size generalization.

\begin{figure}[t]
    \centering
    \includegraphics[width=0.9\columnwidth]{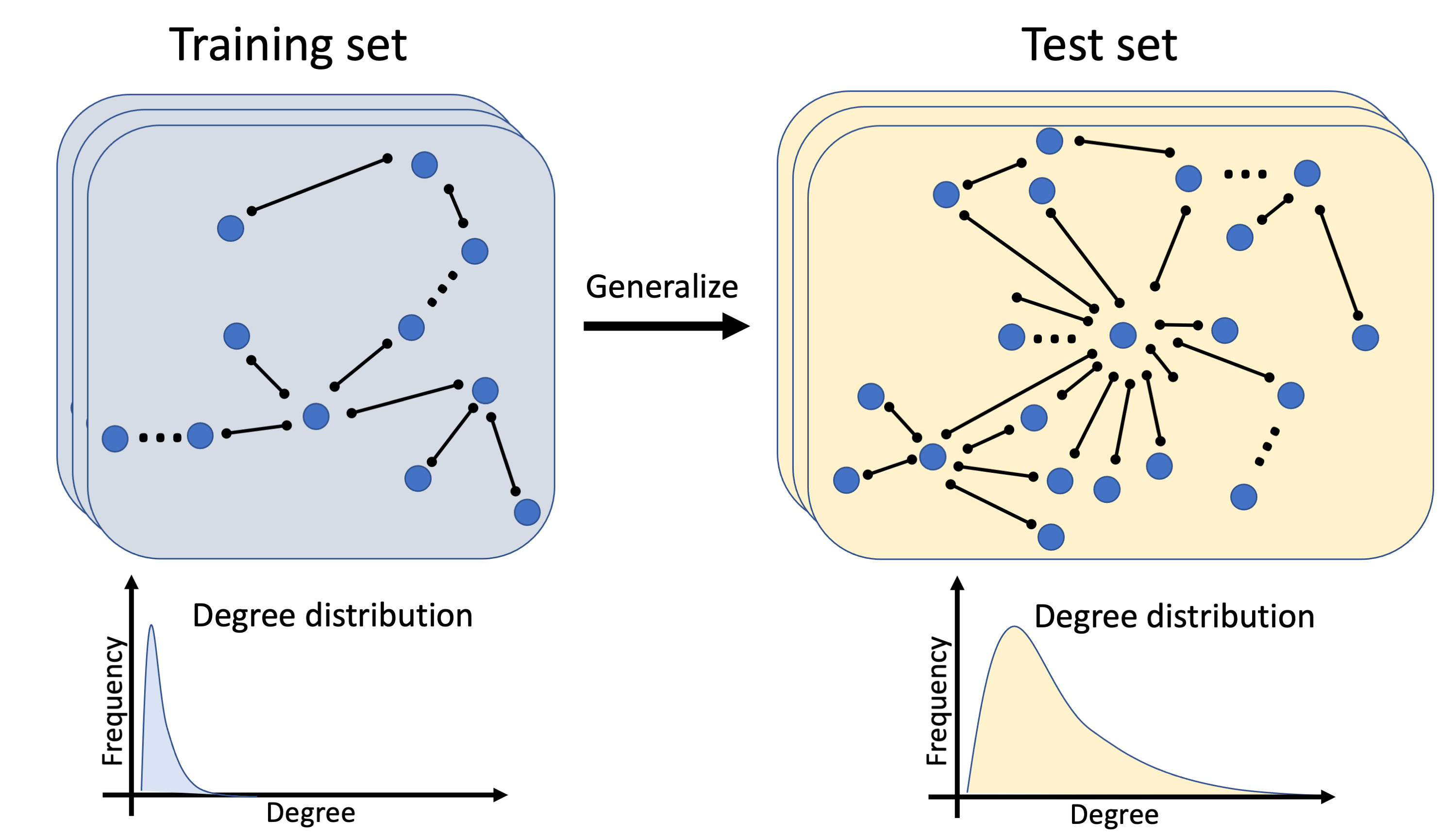}
    \caption{We study the ability of GNNs to generalize from small to large graphs, focusing on graphs in which the local structure depends on the graph size. The figure shows two graph distributions that differ in size and degree distribution. We show that when the local structures in the test set are different from the local structures in the training set, it is difficult for GNNs to generalize. Additionally, we suggest ways to improve generalization. }
     \label{fig:fig1}
\end{figure}

In some cases, GNNs can naturally generalize to graphs whose size is different from what they were trained on, but it is largely unknown when such generalization occurs. Empirically, several papers report good size-generalization performance \citep{li2018combinatorial, luz2020learning, sanchez2020learning}. Other papers \cite{velickovic2019deep, khalil2017learning, joshi2020learning} show that size generalization can be hard. Recently, \citet{xu2020neural} provided theoretical evidence of size generalization capabilities in one-layer GNNs.

The current paper characterizes an important type of graph distributions where size generalization is challenging. Specifically, we analyze graphs for which the distribution of local structures (defined formally in \secref{sec:local graph patterns}) depends on the size of the graph.
See \figref{fig:fig1} for an illustrative example. This dependency is prevalent in a variety of graphs, including for instance, the preferential attachment (PA) model \cite{barabasi1999emergence}, which captures graph structure in social networks \cite{barabasi2002evolution}, biological networks \cite{eisenberg2003preferential,light2005preferential}  and internet link data \cite{capocci2006preferential}.
In PA, the maximal  node degree grows with the graph size. As a second example, in a graph representation of dense point clouds, the node degree grows with the cloud density, and hence with the graph size  \cite{hermosilla2018monte}. 




To characterize generalization to new graph sizes,  
we first formalize a representation of local structures that we call $d$-patterns, inspired by \cite{weisfeiler1968reduction,morris2019weisfeiler, xu2018powerful}. $d$-patterns generalize the notion of node degrees to a $d$-step neighborhood of a given node, capturing the values of a node and its $d$-step neighbors, as seen by GNNs. We then prove that even a small discrepancy in the distribution of $d$-patterns between the test and train distributions may result in weight assignments that do not generalize well. Specifically, it implies that when training GNNs on small graphs there exist "bad" global minima that fail to generalize to large graphs. 

We then study empirically the relation between size generalization and $d$-pattern discrepancy in synthetic graphs where we control the graph structure and size. We find that as $d$-pattern discrepancy grows, the generalization of GNNs to new graph sizes deteriorates.
%

%
%
Finally, we discuss approaches for improving size generalization. 
We {take a self-supervised learning approach and} propose a novel pretext task aimed at learning useful $d$-pattern representations from both small and large graphs. We show that when training on labeled small graphs and with  our new self-supervised task on large graphs, classification accuracy increases on large graphs by  $4\%$ on average on real datasets. 


This paper makes the following contributions: (1) We identify a family of important graph distributions where size generalization is difficult, using a combination of theoretical and empirical results. (2) We suggest approaches for improving size generalization when training on such distributions and show that they lead to a noticeable performance gain. 
The ideas presented in this paper can be readily extended to other graph learning setups where there is a discrepancy between the local structures of the train and test sets. 

\section{Preliminaries}\label{sec:preliminaries}

\textbf{Notation.} We denote by $\{(a_1, m_{a_1}),\dots,(a_n, m_{a_n})\}$ a \textbf{multiset}, that is, a set where we allow multiple instances of the same element. Here $a_1,\dots,a_n$ are distinct elements, and $m_{a_i}$ is the number of times $a_i$ appears in the multiset. Bold-face letters represent vectors. 

\textbf{Graph neural networks.}
In our theoretical results, we focus on the message-passing architecture from \cite{morris2019weisfeiler}. Let $G=(V,E)$ be a graph, and for each node $v\in V$ let $\bh^{(0)}_v\in\mathbb{R}^{d_0}$ be a node feature vector and $\mathcal{N}(v)$ its set of neighbors. The $t$-th layer of first-order GNN is defined as follows for $t>0$:
\begin{equation*}
    \bh^{(t)}_v = \sigma\left( W_2^{(t)} \bh^{(t-1)}_v + \sum_{u\in\mathcal{N}(v)} W_1^{(t)} \bh^{(t-1)}_u + \bb^{(t)}\right).
\end{equation*}

Here, $W_1^{(t)},W_2^{(t)}\in\mathbb{R}^{d_{t}\times d_{t-1}},~ \bb^{(t)}\in \mathbb{R}^{d_t}$ denotes the parameters of the $t$-th layer of the GNN, and $\sigma$ is some non-linear activation (e.g ReLU). It was shown in \cite{morris2019weisfeiler}
that GNNs composed from these layers have maximal expressive power with respect to all message-passing neural networks. 
For node prediction, the output of a $T$-layer GNN for node $v$ is $\bh_v^{(T)}$. For graph prediction tasks an additional readout layer is used: $g^{(T)} = \sum_{v\in V} \bh_v^{(T)}$, possibly followed by a fully connected network.

\textbf{Graph distributions and local structures.} In this paper we focus on graph distributions for which the local structure of the graph (formally defined in \secref{sec:local graph patterns}) depends on the graph size. A well-known distribution family with this property is $G(n,p)$ graphs, also known as Erd\H{o}s-R\'enyi. A graph sampled from $G(n,p)$ has $n$ nodes, and edges are drawn i.i.d. with probability $p$. The mean degree of each node is $n\cdot p$; hence fixing $p$ and increasing $n$ changes the local structure of the graph, {specifically the node degrees}. 

As a second example, we consider the preferential attachment model \cite{barabasi1999emergence}. Here, $n$ nodes are drawn sequentially, and each new node is connected to exactly $m$ other nodes, where the probability to connect to other nodes is proportional to their degree. As a result, high degree nodes have a high probability that new nodes will connect to them. Increasing the graph size, causes the maximum degree in the graph to increase, and thus changes its local structure. We also show that, in real datasets, the local structures of small and large graphs differ. This is further discussed in \secref{sec:improve size gen} and \appref{appen:counting patterns}.

\section{Overview}
\subsection{The size-generalization problem}

We are given two distributions over graphs $P_{1}, P_{2}$ that contain small and large graphs accordingly, and a task that can be solved for all graph sizes using a GNN. We train a GNN on a training set $\mathcal{S}$ sampled i.i.d. from $P_{1}$ and study its performance on $P_{2}$.
In this paper, we focus on distributions that have a high discrepancy between the local structure of the graphs sampled from $P_1$ and $P_2$. 

\textbf{Size generalization is not trivial.} 
Before we proceed with our main results, we argue  that even for the simple regression task of counting the number of edges in a graph, which is solvable for all graph sizes by a 1-layer GNN, GNNs do not naturally generalize {to new sizes}. Specifically, we show that training a 1-layer GNN on a non-diverse dataset reaches a non-generalizing solution with probability 1 over the random initialization. In addition, we show that, in general, the generalizing solution is not the least L1 or L2 norm solution, hence cannot be reached using standard regularization methods. See full derivation in \appref{appen:linear GNN}.

\subsection{Summary of the main argument} 

This subsection describes the main flow of the next sections in the paper. We explore the following arguments:

\textbf{(i) $d$-patterns are a correct notion for studying the expressivity of GNNs.} To study size generalization, we introduce a concept named $d$-patterns, which captures the local structure of a node and its $d$-step neighbors, as captured by GNNs. This notion is formally defined in Section \ref{sec:local graph patterns}. For example, for graphs without node features, a $1$-pattern {of a node} represents {its degree}, and {its} $2$-pattern {represents} its degree and the set of {degrees of its immediate neighbors}. We argue that $d$-patterns are a natural abstract concept for studying the expressive power of GNNs: first, we extend a result by \cite{morris2019weisfeiler} and prove that $d$-layer GNNs  (with an additional node-level network) can be programmed to output any value on any $d$-pattern independently. Conversely, as shown in \cite{morris2019weisfeiler}, GNNs output a constant value when given nodes with the same $d$-pattern, meaning that the expressive power of GNNs is limited by their values on $d$-patterns.

\textbf{(ii) $d$-pattern discrepancy implies the existence {of} bad global minima.} In Section \ref{sec:bad_global}, we focus on the case where graphs in the test distribution contain $d$-patterns that are not present in the train distribution.
In that case, we prove that for any graph task solvable by a GNN, there is a weight assignment that succeeds on training distribution and fails on the test data. In particular, when the training data contains small graphs and the test data contains large graphs, if there is a $d$-pattern discrepancy between large and small graphs, then there are "bad" global minima that fail to generalize to larger graphs.

\textbf{(iii) GNNs converge to non-generalizing solutions.} 
In Section \ref{sec: size gen problem empirical validation} we complement these theoretical results with a controlled empirical study that investigates the generalization capabilities of the solutions that GNNs converge to. 
We show, for several synthetic graph distributions in which we have control over the graph structure, that the generalization of GNNs in practice is correlated with the discrepancy between the local distributions of large and small graphs. Specifically, {when} the $d$-patterns in large graphs are not found in small graphs, GNNs tend to converge to a global minimum that succeeds on small graphs and fail on large graphs. This happens even if there is a "good" global minimum that solves the task for all graph sizes. This phenomenon is also prevalent in real datasets as we show in Section \ref{sec:improve size gen}.

\textbf{(iv) Size generalization can be improved.}
Lastly, In Section \ref{sec:improve size gen}, we discuss two approaches for improving size generalization, motivated by our findings. We first formulate the learning problem as a domain adaptation (DA) problem where the source domain consists of small graphs and the target domain consists of large graphs. We then suggest two learning setups: (1) Training  GNNs on a novel self-supervised task aimed at learning meaningful representations for $d$-patterns from both the target and source domains. (2) A semi-supervised learning setup with a limited number of labeled examples from the target domain.  We show that both setups are useful in a series of experiments on synthetic and real data. Notably, training with our new SSL task increases classification accuracy on large graphs in real datasets.

\section{GNNs and local graph patterns}\label{sec:local graph patterns}

We wish to understand theoretically the conditions where a GNN trained on graphs with a small number of nodes can generalize to graphs with a large number of nodes. To answer this question, we first analyze what information is available to each node after a graph is processed by a $d$-layer GNN. It is easy to see that every node can receive information from its neighbors which are at most $d$ hops away. We note, however, that nodes do not have full information about their $d$-hop neighborhood. For example, GNNs cannot determine if a triangle is present in a neighborhood of a given node~\cite{chen2020can}.

To characterize the information that can be found in each node after a $d$-layer GNN, we introduce the notion of $d$-patterns, motivated by the structure of the node descriptors used in the Weisfeiler-Lehman test \cite{weisfeiler1968reduction}: a graph isomorphism test which was recently shown to have the same representational power as GNNs (\cite{xu2018powerful, morris2019weisfeiler}). 

\begin{definition}[$d$-patterns]
Let $C$ be a finite set of node features, and let $G=(V,E)$ be a graph with node feature $c_v\in C$ for every node $v\in V$. We define the \textbf{d-pattern} of a node $v\in V$ for $d\geq 0$ recursively: For $d=0$, the $0$-pattern is $c_v$. For $d>0$, the $d$-pattern of $v$ is $p=(p_v,\{(p_{i_1}, m_{p_{i_1}}), \dots, (p_{i_\ell}, m_{p_{i_\ell}})\})$ iff node $v$ has $(d-1)$-pattern $p_v$ and for every $j\in \{1,\dots,\ell\}$ the number of neighbors of $v$ with $(d-1)$-pattern $p_{i_j}$ is exactly $m_{p_{i_j}}$. Here, $\ell$ is the number of distinct neighboring $d-1$ patterns of $v$.
\end{definition}



\begin{figure}[t]
\centering
    \includegraphics[width=0.4\textwidth]{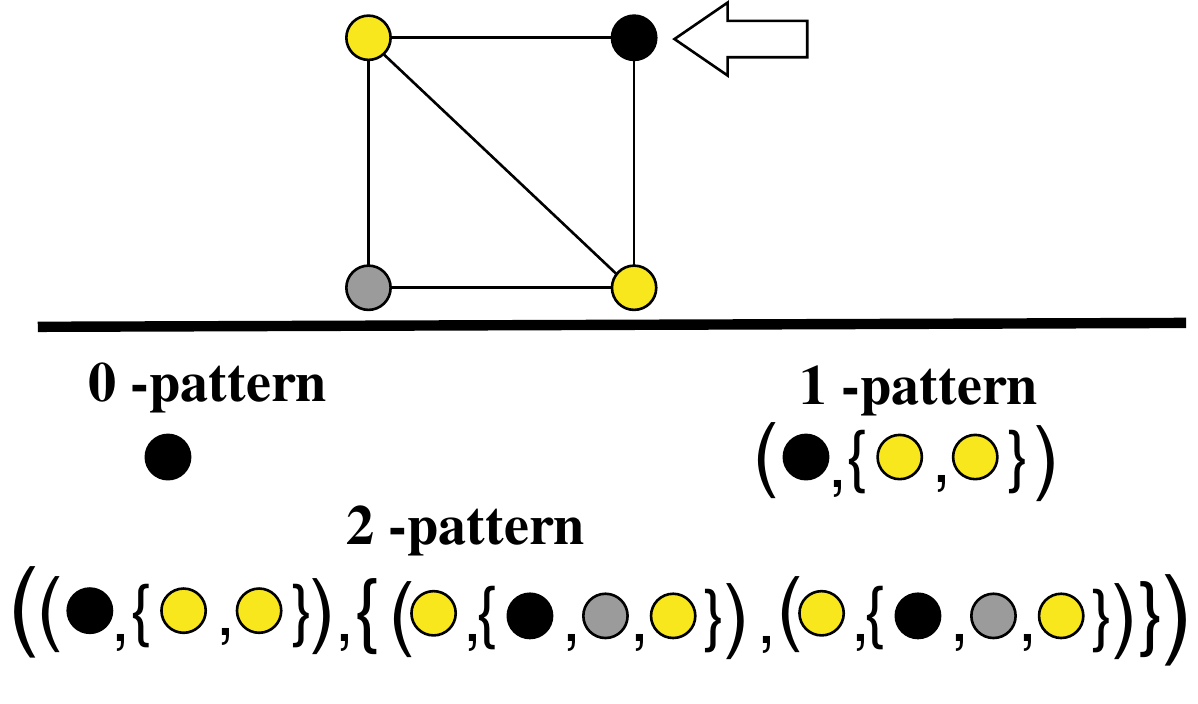}
    \caption{\textbf{Top:} A graph with 4 nodes. Each color represent a different feature. \textbf{Bottom:} The 0,1 and 2-patterns of the black node.}
    \label{fig:0-1-2-patterns}
\end{figure}
In other words, the $d$-pattern of a node is an encoding of the $(d-1)$-patterns of itself and its neighbors.  
For example, assume all the nodes in the graphs start with the same node feature. The $1$-pattern of each node is its degree. The $2$-pattern of each node is for each possible degree $i\in \mathbb{N}$ the number of neighbors with degree $i$, concatenated with the degree of the current node. In the same manner, the $3$-pattern of a node is for each possible $2$-pattern, the number of its neighbors with this exact $2$-pattern. 

\revision{
\figref{fig:0-1-2-patterns}.
illustrates  $0,1$ and $2$-patterns for a graph with three categorical node features, represented by three colors (yellow, grey, and black). For this case, which generalizes the uniform node feature case discussed above, the 0-pattern is the node's categorical feature; 1-patterns count the number of neighbors with a particular feature. The same definition applies to high-order patterns.}

We claim that the definition of $d$-patterns gives an exact characterization to the potential knowledge that a $d$-layer GNN has on each node. First, Theorem \ref{thm:d-patterns constant} is a restatement of Theorem 1 in \cite{morris2019weisfeiler} in terms of $d$-patterns:

\begin{theorem}\label{thm:d-patterns constant}
Any function that can be represented by a $d$-layer GNN is constant on nodes with the same $d$-patterns. 
\end{theorem}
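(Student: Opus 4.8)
The plan is to prove, by induction on the layer index $t\in\{0,1,\dots,d\}$, the slightly stronger statement: for every message-passing GNN of the form defined above and every node $v$ in every graph, the intermediate representation $\bh_v^{(t)}$ is a function of the $t$-pattern of $v$ alone. Write $p^{(t)}_u$ for the $t$-pattern of a node $u$. The theorem then follows by taking $t=d$, since the node-level output of a $d$-layer GNN is $\bh_v^{(d)}$, and post-composing with an additional node-wise fully connected network (or any fixed function of $\bh_v^{(d)}$) preserves the property.

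For the base case $t=0$, the representation $\bh_v^{(0)}$ is the input feature of $v$, which is determined by the categorical feature $c_v$; by definition $p^{(0)}_v=c_v$, so $\bh_v^{(0)}$ is a function of $p^{(0)}_v$. For the inductive step, suppose $t\ge 1$ and that there is a function $F_{t-1}$ with $\bh_u^{(t-1)}=F_{t-1}(p^{(t-1)}_u)$ for all nodes $u$. Fix a node $v$ with $t$-pattern $p^{(t)}_v=(p_v,\{(p_{i_1},m_{p_{i_1}}),\dots,(p_{i_\ell},m_{p_{i_\ell}})\})$, so that $p_v=p^{(t-1)}_v$ and, for each $j$, exactly $m_{p_{i_j}}$ of the neighbors of $v$ have $(t-1)$-pattern $p_{i_j}$. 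Grouping the neighbor sum in the update rule according to the neighbors' $(t-1)$-patterns and applying the inductive hypothesis gives
\begin{equation*}
\bh_v^{(t)}=\sigma\!\left(W_2^{(t)}F_{t-1}(p_v)+\sum_{j=1}^{\ell}m_{p_{i_j}}\,W_1^{(t)}F_{t-1}(p_{i_j})+\bb^{(t)}\right),
\end{equation*}
which depends only on $p_v$ together with the multiset $\{(p_{i_1},m_{p_{i_1}}),\dots,(p_{i_\ell},m_{p_{i_\ell}})\}$ --- that is, only on $p^{(t)}_v$. This defines $F_t$ and closes the induction.

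The argument is essentially bookkeeping, and the only place requiring a little care is the re-indexing of the neighbor sum: two nodes with the same $d$-pattern have, by definition, the same $(d-1)$-pattern and the same multiset of neighbors' $(d-1)$-patterns, so the grouped sums agree term by term (and in particular the permutation-invariant aggregation $\sum_{u\in\mathcal{N}(v)}$ only ever "sees" this multiset). There is no genuine obstacle; this is the easy direction of the GNN/Weisfeiler-Lehman correspondence established in \cite{morris2019weisfeiler}, recast in the language of $d$-patterns, and one could alternatively deduce it by observing that $d$ rounds of WL color refinement produce exactly the equivalence classes induced by $d$-patterns and that GNN node features are constant on WL color classes.
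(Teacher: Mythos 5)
Your proof is correct, and it takes a more self-contained route than the paper's. The paper's proof proceeds in two steps: it first argues by induction that the color assigned to a node by $d$ rounds of the $1$-WL algorithm is in bijection with that node's $d$-pattern, and then invokes Theorem 1 of \cite{morris2019weisfeiler} (which states that message-passing GNNs cannot distinguish nodes with the same WL color) as a black box to conclude. You instead run the induction directly on the GNN's own intermediate representations $\bh_v^{(t)}$, showing at each layer that grouping the neighbor aggregation by the neighbors' $(t-1)$-patterns exhibits $\bh_v^{(t)}$ as a function of the $t$-pattern alone. Your version avoids importing the GNN--WL correspondence as an external fact and makes the mechanism transparent: the only structural features of the update rule it uses are that the previous representation of $v$ enters only through $p_v^{(t-1)}$ and that the neighbor aggregation is a permutation-invariant sum, hence determined by the multiset of neighbor $(t-1)$-patterns. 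The trade-off is that the paper's argument, by going through WL colors, simultaneously pins down the exact correspondence between $d$-patterns and WL colors (a fact it reuses implicitly elsewhere), whereas your argument proves only the one-directional statement needed for the theorem. You flag the WL route as an alternative at the end, which is precisely what the paper does; either route is fine.
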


The theorem states that any $d$-layer GNN will output the same result for nodes with the same $d$-pattern. Thus, we can refer to the output of a GNN on the $d$-patterns themselves. 
We stress that these $d$-patterns only contain a part of the information regarding the $d$-neighborhood ($d$ hops away from the node), and different neighborhoods could have the same $d$-patterns. The full proof can be found in \appref{appen:proofs from theoretical results} and follows directly from the analogy between the iterations of the WL algorithm and $d$-patterns. 

Next, the following theorem shows that given a set of $d$-patterns and the desired output for each such pattern, there is an assignment of weights to a GNN with $d+2$ layers that perfectly fits the output for each pattern. 


\begin{theorem}\label{thm:overfit}
Let $C$ be a finite set of node features, $P$ be a finite set of $d$-patterns on graphs with maximal degree $N\in\mathbb{N}$, and for each pattern $p\in P$ let $y_p\in\mathbb{R}$ be some target label. Then there exists a GNN with $d+2$ layers, width bounded by $\max\left\{(N+1)^d \cdot |C|, 2\sqrt{|P|} \right\}$ and ReLU activation such that for every graph $G$ with nodes $v_1,\dots,v_n$ and corresponding $d$-patterns $p_1,\dots,p_n \subseteq P$, the output of this GNN on  $v_i$ is exactly $y_{p_i}$.
\end{theorem}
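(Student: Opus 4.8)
The plan is to construct the GNN explicitly in two stages. First, I would use the first $d$ layers (plus a node-level network, so really $d+1$ layers when counting the extra width-heavy layer) to compute, for each node $v_i$, a one-hot encoding of its $d$-pattern $p_i \in P$. The key observation is that $d$-patterns are defined recursively: the $d$-pattern of a node is determined by the $(d-1)$-patterns of itself and its neighbors, together with multiplicities. Since the maximal degree is $N$, each node has at most $N$ neighbors, so the multiplicity of any $(d-1)$-pattern among a node's neighbors is an integer in $\{0,1,\dots,N\}$. Working inductively: at layer $t$, assume each node carries a one-hot vector indexing its $(t-1)$-pattern among the finite set of possible $(t-1)$-patterns. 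The message-passing sum $\sum_{u \in \mathcal N(v)} W_1 \bh_u^{(t-1)}$ then yields the vector of multiplicities of each $(t-1)$-pattern among $v$'s neighbors, and concatenating (via $W_2$) $v$'s own $(t-1)$-pattern one-hot gives all the data needed to determine the $t$-pattern. A ReLU layer (or two) can then map this integer vector to the one-hot encoding of the resulting $t$-pattern — this is a lookup-table / finite-function computation, which ReLU networks can implement exactly since the inputs lie in a known finite set. The width needed here is governed by the number of possible $d$-patterns, which is crudely bounded by $(N+1)^d \cdot |C|$ (at each of $d$ steps, a pattern is a function from the $\le (N+1)^{d-1}|C|$ previous-level patterns to multiplicities, but a coarser and sufficient bound counts ``$d$-step neighborhoods up to the relevant data'' giving the stated $(N+1)^d |C|$).

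Second, once every node carries a one-hot encoding of its actual $d$-pattern $p_i$, I would use the final $2$ layers to map this one-hot vector to the scalar target $y_{p_i}$. Naively this is a single linear layer of width $|P|$, but the theorem asks for width $2\sqrt{|P|}$, so I would instead use the standard trick of factoring the lookup through a two-layer construction: index patterns by pairs $(a,b)$ with $a,b \in \{1,\dots,\lceil\sqrt{|P|}\rceil\}$, have the first of the two layers compute two ``coordinate'' blocks of size $\sqrt{|P|}$ each (a ``row'' indicator and a ``column'' indicator), and have the second layer combine them — using ReLU to implement the AND of row-index $=a$ and column-index $=b$ — to select $y_{(a,b)}$. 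This is why the width bound is $\max\{(N+1)^d|C|, 2\sqrt{|P|}\}$ rather than a sum: the pattern-computation stage and the readout stage are in different layers, so we take the max.

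The main obstacle, and the step requiring the most care, is showing that a ReLU layer of bounded width can exactly implement the finite ``pattern-update'' function — i.e. the map from (own $(t-1)$-pattern, multiset of neighbors' $(t-1)$-patterns) to $t$-pattern — including correctly handling the fact that only certain multiplicity-vectors actually arise (those summing to a realizable degree $\le N$) and that patterns not in $P$ need not be distinguished from each other, but patterns in $P$ must be separated. Concretely, I expect to need a small lemma of the form: for any function $f : S \to \mathbb R^k$ on a finite set $S \subset \mathbb Z^m$ of integer points, there is a two-layer ReLU network computing $f$ exactly on $S$, with width $O(|S|)$ — proved by building, for each point $s \in S$, a ``bump'' neuron $\mathrm{ReLU}(\langle w_s, x\rangle - \theta_s)$ that is positive exactly at $s$ and zero on all other points of $S$ (possible because the points are isolated integer points, so a separating hyperplane exists with a margin). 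Assembling these bumps linearly gives $f$. The bookkeeping then consists of: verifying the width bounds multiply/compose correctly through $d$ steps; checking that the ReLU activation between message-passing layers does not destroy the one-hot representations (it acts as identity on nonnegative coordinates, which one-hot and multiplicity vectors are); and confirming the layer count is $d$ message-passing layers to build up $d$-patterns, one more to finish the last pattern update / coordinate-split, and one final layer for the readout, totalling $d+2$. I would also remark that Theorem~\ref{thm:d-patterns constant} guarantees this is tight in the sense that no fewer than $d$ layers could separate nodes that first differ at their $d$-patterns, so the construction is essentially optimal in depth.
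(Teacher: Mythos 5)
Your overall plan matches the paper's at the top level: use the first $d$ message-passing layers to produce an injective representation of each node's $d$-pattern, then append a small ReLU net to memorize the desired label for each representation, with a square-root factoring trick to get the readout down to width $2\sqrt{|P|}$. That readout trick is exactly what the paper invokes by citing a memorization theorem of Yun et al.\ (their Theorem~C.1, a 3-layer ReLU net of width $2\sqrt{N}$ fitting $N$ distinct points), and your ``bump neuron'' idea plays the same role as the paper's Lemma~C.2 for building step-function indicators.

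However, there is a real gap in the intermediate representation. You propose to carry, at layer $t$, a \emph{one-hot} encoding of the $(t{-}1)$-pattern, and you claim the number of such patterns is ``crudely bounded by $(N+1)^d\cdot|C|$.'' That bound is false, and the one-hot approach does not achieve the theorem's stated width. Already at $t=1$ with $|C|=2$, $N=3$, the number of distinct $1$-patterns (own feature, multiset of $\le N$ neighbor features) is $2\cdot\binom{5}{2}=20 > (N+1)^1|C| = 8$; for general $t$ the count of $t$-patterns grows doubly exponentially in $t$. Even if you restrict attention to the sub-patterns of members of $P$, the count at layer $d$ is $|P|$ itself, and the theorem permits $|P|$ to exceed $\max\{(N+1)^d|C|, 2\sqrt{|P|}\}$. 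The paper sidesteps this by \emph{not} using a one-hot encoding: its layer-$d$ embedding has dimension $|C|(N^d + N^{d-1}) \le (N+1)^d|C|$ and consists of, for each coordinate $n$ of the previous layer's embedding and each count $k\in\{1,\dots,N\}$, an indicator ``exactly $k$ neighbors have a $1$ in coordinate $n$,'' together with a copy of the node's own previous-layer embedding. This is a compressed, injective (pattern-recoverable) encoding whose size grows geometrically with $d$ rather than with the number of realizable patterns; the full $|P|$-way distinction is only made at the very end, inside the $2\sqrt{|P|}$-wide memorization network, not in the message-passing stack. To repair your argument you would need to replace the one-hot invariant with such a compressed injective encoding and argue its injectivity through the induction (which is the technically delicate point the paper handles via its Lemma~C.2 step-function construction).

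Two smaller points: (i) your accounting ``one more [layer] to finish the last pattern update / coordinate-split, and one final layer for the readout'' needs care, since the memorization sub-network is itself three layers (two hidden $+$ linear output); the paper absorbs the post-ReLU linear map $U^{(t+1)}$ into the next layer's weight matrices so that the message-passing stack stays at exactly $d$ layers, leaving two genuine hidden FC layers for the suffix. (ii) Your finite-lookup lemma with width $O(|S|)$ is fine as a tool, but if you instantiate it naively on the set of $d$-patterns you again get width $\propto|P|$; the square-root factoring must be applied there, which is precisely what the cited memorization theorem does.
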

The full proof is in \appref{appen:proofs from theoretical results}. This theorem strengthens Theorem 2 from \cite{morris2019weisfeiler} in two ways: (1) We prove that one can specify the output for every $d$-pattern while \cite{morris2019weisfeiler} show that there is a $d$-layer GNN that can distinguish all $d$-patterns; (2) Our network construction is more efficient in terms of width and dependence on the number of $d$-patterns ($2\sqrt{|P|}$ instead of $|P|$).

We note that the width of the required GNN from the theorem is not very large if $d$ is small, where $d$ represents the depth of the GNN. In practice, shallow GNNs are very commonly used and are empirically successful. The $d+2$ layers in the theorem can be split into $d$ message-passing layers plus $2$ fully connected layers that are applied to each node independently. 
\thmref{thm:overfit} can be readily extended to a vector output for each $d$-pattern, at the cost of increasing the width of the layers.

Combining \thmref{thm:d-patterns constant} and \thmref{thm:overfit} shows that we can independently control the values of $d$-layer GNNs on the set of $d$-patterns (possibly with an additional node-wise function) and these values completely determine the GNN's output.

\section{"Bad" global minima exist}\label{sec:bad_global}

We now consider \emph{any} graph-prediction task solvable by a $d$-layer GNN. Assume we have a training distribution of (say, small) graphs and a possibly different test distribution of (say, large) graphs. We show that if the graphs in the test distribution introduce unseen $d$-patterns, then there exists a $(d+3)$-layer GNN that solves the task on the train distribution and fails on the test distribution. We will consider both graph-level tasks (i.e. predicting a single value for the entire graph, e.g., graph classification) and node-level tasks (i.e. predicting a single value for each node, e.g., node classification).

\begin{theorem}\label{thm:graph tasks}
Let $P_1$ and $P_2$ be finitely supported distributions of graphs. Let $P^d_1$ be the distribution of $d$-patterns over $P_1$ and similarly  $P^d_2$ for $P_2$. Assume that any graph in $P_2$ contains a node with a $d$-pattern in  $P^{d}_2\setminus P^{d}_1$.  Then, for any graph regression task solvable by a GNN with depth $d$ there exists a GNN with depth at most $d + 3$ that perfectly solves the task on $P_1$ and predicts an answer with arbitrarily large error on all graphs from $P_2$.
\end{theorem}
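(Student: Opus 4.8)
The plan is to start from a depth-$d$ GNN $f$ that computes the true label function (such an $f$ exists because the task is assumed solvable by a depth-$d$ GNN), and to additively perturb $f$ by a \emph{pattern detector}: a GNN that outputs $0$ on every graph whose $d$-patterns all appear in $P_1$, but a large value on any graph containing a pattern from $P^d_2\setminus P^d_1$. Since $P_1$ and $P_2$ are finitely supported and all graphs in their supports are finite, only finitely many node features occur; taking $C$ to be this finite alphabet, the set $P:=P^d_1\cup P^d_2$ of $d$-patterns realized by $P_1$ or $P_2$ is finite, and the maximal degree $N$ over this finite family of graphs is finite. Apply \thmref{thm:overfit} to $P$ with target labels $y_p=0$ for $p\in P^d_1$ and $y_p=M$ for $p\in P^d_2\setminus P^d_1$, where $M>0$ is fixed later. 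This yields a GNN $h$ with $d+2$ layers (equivalently, $d$ message-passing layers followed by $2$ node-wise fully-connected layers) whose output on a node $v$ with $d$-pattern $p_v$ is exactly $y_{p_v}$. Composing $h$ with the sum readout gives a graph-level quantity $D(G)=\sum_{v\in V(G)} y_{p_v}\ge 0$. By construction $D(G)=0$ for every $G$ in the support of $P_1$, whereas $D(G)\ge M$ for every $G$ in the support of $P_2$, since such a $G$ has a node with $d$-pattern in $P^d_2\setminus P^d_1$ and all other summands are nonnegative.

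\textbf{Assembling the counterexample.} Run the message-passing layers of $f$ and of $h$ in parallel on the shared input features, using block-diagonal weight matrices, and pad the shallower branch with pass-through layers realized under the ReLU nonlinearity via $x=\sigma(x)-\sigma(-x)$ (this only doubles the width, which is immaterial here). Feed both blocks into the sum readout to obtain the pair $\big(\sum_v \bh^{(d)}_v,\ D(G)\big)$, and finish with a fully-connected head that applies $f$'s original post-readout network to the first block and adds $c\cdot D(G)$ from the second, for a constant $c>0$. The resulting network $F$ has depth at most $d+3$ (the $d+2$ layers of the detector, with the depth-$d$ solver padded to run alongside it, followed by the readout and the combining head). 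On the support of $P_1$, $F(G)=f(G)+c\cdot 0=f(G)$, so $F$ reproduces the true labels exactly and attains the same (optimal) training loss as $f$, i.e.\ it is a ``bad'' global minimum. On the support of $P_2$, which is finite, the true labels and the values $f(G)$ are bounded in absolute value by some $B$, hence $|F(G)-(\text{true label})|\ge cM-2B$; choosing $M$ (or $c$) large enough makes this exceed any prescribed error bound simultaneously for all graphs in the support of $P_2$. (For node-level tasks the same argument works with the readout dropped and $y_{p_v}$ added directly to $f$'s node output.)

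\textbf{Main obstacle.} The delicate points are: (i) verifying that the collection of relevant $d$-patterns is genuinely finite, so that \thmref{thm:overfit} is applicable --- this is exactly where finite support of $P_1,P_2$ and a finite node-feature alphabet are used; and (ii) the parallel composition of two ReLU GNNs of different depths into a single GNN, in particular realizing pass-through layers under ReLU and checking that the total depth stays within the $d+3$ budget. Everything else --- nonnegativity of $D$, its values on the two supports, boundedness of $f$ on the finite support of $P_2$, and the final choice of $M$ --- is routine bookkeeping.
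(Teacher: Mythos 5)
Your proof is correct and follows essentially the same route as the paper: both invoke \thmref{thm:overfit} to prescribe node-wise values that vanish on $d$-patterns appearing in $P_1^d$ and take a large constant on those unique to $P_2^d$, then combine this detector with the assumed depth-$d$ solver so the summed readout is unchanged on $P_1$ but shifted by an arbitrarily large amount on every graph from $P_2$. The only cosmetic difference is that you run the solver and the detector as block-diagonal parallel branches and add them after readout, whereas the paper folds the solver's node outputs $H(G)_v$ directly into the \thmref{thm:overfit} targets (via its vector-valued extension) and rescales the indicator coordinate in an augmented post-readout MLP; both yield the same layer budget and the same conclusion.
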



The proof directly uses the construction from \thmref{thm:overfit}, and can be found in \appref{appen:proof from sec corollaries}. The main idea is to leverage the unseen $d$-patterns from $P_2^d$ to change the output on graphs from $P_2$. 

As an example, consider the task of counting the number of edges in the graph. In this case, there is a simple GNN that generalizes to all graph sizes: the GNN first calculates the node degree for each node using the first message-passing layer and then uses the readout function to sum the node outputs. This results in the output $2|E|$, which can be scaled appropriately. To define a network that outputs wrong answers on large graphs under our assumptions, we can use \thmref{thm:overfit} and make sure that the network outputs the node degree on patterns in $P^{d}_1$ and some other value on patterns in $P^{d}_2\setminus P^{d}_1$. 

Note that although we only showed in \thmref{thm:overfit} that the output of GNNs can be chosen for nodes, the value of GNNs on the nodes has a direct effect on graph-level tasks. This happens because of the global readout function used in GNNs, which aggregates the GNN output  over all the nodes. 

Next, we prove a similar theorem for node tasks. Here, we show a relation between the discrepancy of $d$-pattern distributions  and the error on the large graphs.

\begin{theorem}\label{thm:overfit size node}
Let $P_1$ and $P_2$ be finitely supported distributions on graphs, and 
let $P^{d}_1$ be the distribution of $d$-patterns over $P_1$ and similarly  $P^{d}_2$ for $P_2$. 
For any node prediction task  which is solvable by a GNN with depth $d$ and $\epsilon>0$ there exists a GNN with depth at most $d + 2$ that has 0-1 loss (averaged over the nodes) smaller then $\epsilon$ on $P_1$ and 0-1 loss $\Delta(\epsilon)$ on $P_2$, where $    \Delta(\epsilon)=\max_{A:P^{d}_1(A)<\epsilon}P^{d}_2(A).$
Here, $A$ is a set of $d$-patterns, and $P(A)$ is the total probability mass for that set under $P$.
\end{theorem}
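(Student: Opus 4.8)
The plan is to mimic the construction used for the graph-level result (Theorem \ref{thm:graph tasks}), but to be more careful about how much of the train distribution's probability mass we are allowed to disturb. First I would fix an $\epsilon>0$ and, since the task is solvable by a depth-$d$ GNN, fix such a GNN $\mathcal{G}^\star$ that solves the node task exactly (so its $0$-$1$ loss on $P_1$ is $0$, in particular below $\epsilon$). By Theorem \ref{thm:d-patterns constant}, the output of $\mathcal{G}^\star$ depends only on the $d$-pattern of a node, so it induces a labeling $p\mapsto y_p^\star$ on the (finite) set of all $d$-patterns appearing in $P_1\cup P_2$; call this set $P$. Next I would choose the set $A\subseteq P_1^d$ of $d$-patterns to ``corrupt.'' Let $A^\star$ be a set of $d$-patterns attaining the maximum in the definition of $\Delta(\epsilon)$, i.e. $P_1^d(A^\star)<\epsilon$ and $P_2^d(A^\star)=\Delta(\epsilon)$ — here I am treating the max as attained because there are finitely many $d$-patterns; otherwise take a set within arbitrarily small slack and absorb it into $\epsilon$.

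Then I would apply Theorem \ref{thm:overfit} to the finite pattern set $P$ with target labels $\tilde y_p$ defined by: $\tilde y_p = y_p^\star$ for $p\notin A^\star$, and $\tilde y_p = $ (some fixed wrong label, e.g. $1-y_p^\star$ or any label differing from the true one) for $p\in A^\star$. Theorem \ref{thm:overfit} gives a GNN of depth $d+2$ whose output on any node equals $\tilde y_{p}$ where $p$ is that node's $d$-pattern. Now I bound the two losses. On $P_1$: a node is misclassified by the new GNN only if its $d$-pattern lies in $A^\star$, and the $0$-$1$ loss averaged over nodes, taken in expectation over $P_1$, is exactly the probability that a random node (under the $d$-pattern distribution $P_1^d$) has a pattern in $A^\star$, which is $P_1^d(A^\star)<\epsilon$. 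On $P_2$: every node whose $d$-pattern lies in $A^\star$ is assigned $\tilde y_p \ne y_p^\star$; since $\mathcal{G}^\star$ was correct on the task, $y_p^\star$ is the true label, so each such node is now wrong, giving $0$-$1$ loss (averaged over nodes, in expectation over $P_2$) at least $P_2^d(A^\star)=\Delta(\epsilon)$. Matching the claimed value is then just a matter of taking $A^\star$ optimal.

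One subtlety I would flag explicitly: Theorem \ref{thm:overfit} needs a \emph{finite} set of $d$-patterns and a finite bound $N$ on the maximal degree, which is exactly why the theorem assumes $P_1,P_2$ are finitely supported — I would note that the union of their (finitely many) supports has bounded maximal degree and finitely many $d$-patterns, so the hypothesis of Theorem \ref{thm:overfit} is met and the resulting width is finite. A second point is the precise reading of ``$0$-$1$ loss averaged over the nodes'': the statement is about the expected per-node error over the distribution, and the identity ``expected fraction of misclassified nodes $=$ probability mass of $A^\star$ under the induced $d$-pattern distribution'' is precisely the definition of $P^d$ as the distribution of $d$-patterns obtained by sampling a graph and then a uniform node. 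I expect the main (minor) obstacle to be bookkeeping of this averaging/measure-theoretic bridge between ``fraction of nodes in a graph'' and ``probability under $P^d$,'' and making sure the corruption on $A^\star$ genuinely flips the label in the $0$-$1$ sense on \emph{every} affected node — both are routine once the definitions are unwound, so the proof is essentially a direct corollary of Theorem \ref{thm:overfit}. I would close by remarking that $\Delta$ is monotone in $\epsilon$ and that $\Delta(\epsilon)\to P_2^d(P_2^d\setminus P_1^d)$-type quantities recover the qualitative message of Theorem \ref{thm:graph tasks}.
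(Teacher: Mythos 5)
Your proposal is correct and is essentially the paper's own proof: both take the set $A$ attaining the maximum in $\Delta(\epsilon)$, invoke Theorem~\ref{thm:overfit} (finiteness guaranteed by finite support) to assign wrong labels exactly on $A$ and correct labels elsewhere, and then read off the two 0-1 losses as $P_1^d(A)<\epsilon$ and $P_2^d(A)=\Delta(\epsilon)$. The extra bookkeeping you flag (existence of the maximizer, the node-averaging/$P^d$ bridge) is valid but handled implicitly in the paper.
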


This theorem shows that for node prediction tasks, if there is a large discrepancy between the graph distributions (a set of $d$-patterns with small probability in $P_1^d$ and large probability in $P_2^d$), then there is a solution that solves the task on $P_1$, and generalizes badly for $P_2$. The full proof can be found in \appref{appen:proof from sec corollaries}.

\textbf{Examples.}
The above results show that even for simple tasks, GNNs may fail to generalize to unseen sizes, here are two examples. (i) Consider the task of counting the number of edges in a graph. From \thmref{thm:graph tasks} there is a GNN that successfully outputs the number of edges in graphs with max degree up to $N$, and fails on graphs with larger max degrees. 
(ii) Consider some node regression task, when the training set consists of graphs sampled i.i.d from an Erd\H{o}s-R\'enyi graph $G(n,p)$, and the test set contains graphs sampled i.i.d from $G(2n,p)$. In this case, a GNN trained on the training set will be trained on graphs with an average degree  $np$, while the test set contains graphs with an average degree $2np$. When $n$ is large, with a very high probability, the train and test set will not have any common $d$-patterns, for any $d>0$. Hence, by \thmref{thm:overfit size node} there is a GNN that solves the task for small graphs and fails on large graphs. 

The next section studies the relation between size generalization and local graph structure in controlled experimental settings on synthetic data. 

\section{A controlled empirical study}\label{sec: size gen problem empirical validation}

The previous section showed that there exist bad global minima that fail to generalize to larger graphs. In this section, we study empirically whether common training procedures lead to bad global minima in practice. 
Specifically, we demonstrate on several synthetic graph distributions, that reaching bad global minima is tightly connected to the discrepancy of $d$-pattern distributions between large and small graphs. We identify two main phenomena:
\textbf{(A)} When there is a large discrepancy between the $d$-pattern distributions of large and small graphs, GNNs fail to generalize; \textbf{(B)} As the discrepancy between these distributions gets smaller, GNNs get better at generalizing to larger graphs.

\textbf{Tasks.} In the following experiments, we use a controlled regression task in a student-teacher setting. In this setting, we sample a ``teacher" GNN with random weights (drawn i.i.d from $U([-0.1,0.1]))$, freeze the network, and label each graph in the dataset using the output of the ``teacher" network. Our goal is to train a ``student" network, which has the same architecture as the ``teacher" network, to fit the labels of the teacher network. The advantages of this setting are twofold: (1) \emph{A solution is guaranteed to exist}:  We know that there is a weight assignment of the student network which perfectly solves the task for graphs of any size.
(2) \emph{Generality}: It covers a diverse set of tasks solvable by GNNs. 
As the evaluation criterion, we use the squared loss.

\begin{figure*}[ht]
    \centering
    \begin{tabular}{cccc}
         \includegraphics[width=0.23\textwidth]{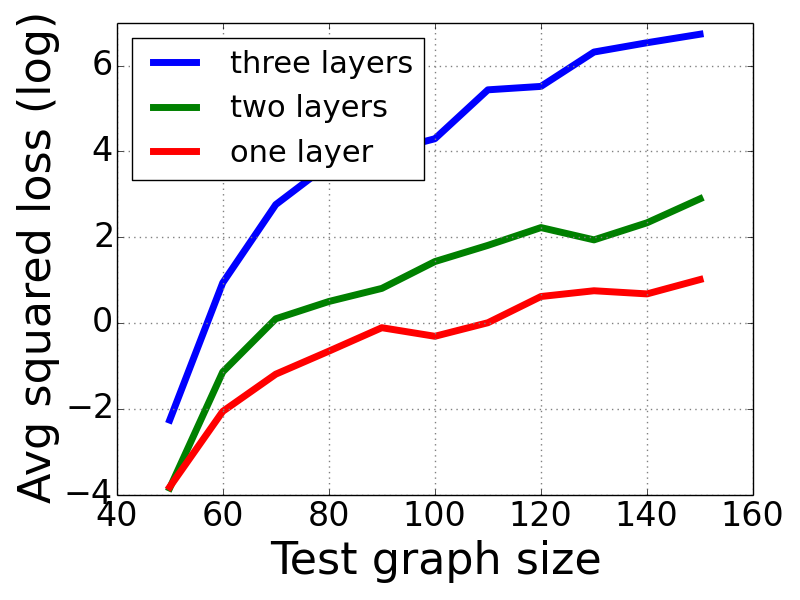}& \includegraphics[width=0.23\textwidth]{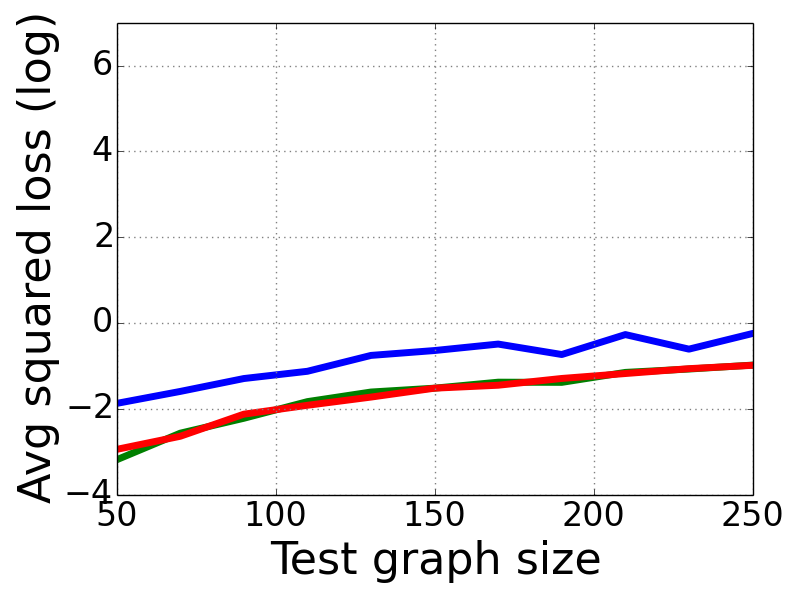}& \includegraphics[width=0.23\textwidth]{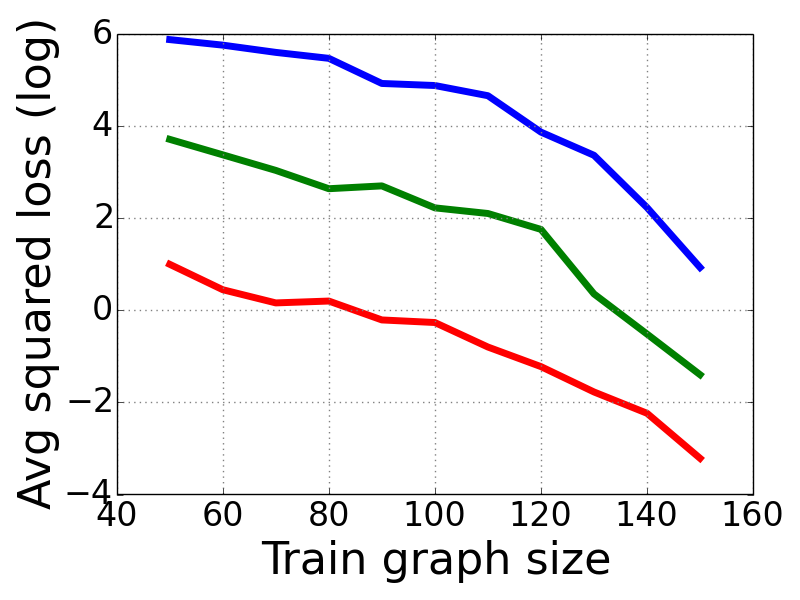}&  \includegraphics[width=0.23\textwidth]{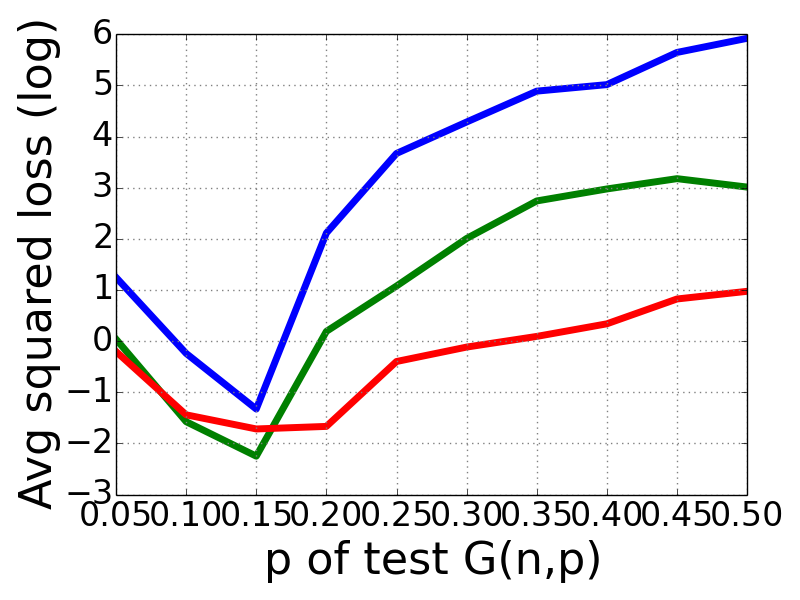}
         \\
         (a)&(b)&(c)&(d) 
    \end{tabular}

    \caption{ The effect of graph size and $d$-pattern distribution on generalization in $G(n,p)$ graphs in a student-teacher graph regression task. The $y$-axis represents the squared loss in $\log_{10}$  scale. (a) Bounded training size $n\in [40,50]$ and varying test size with constant $p=0.3$ (b) \revision{Bounded training training size $n\in [40,50]$ and varying test size {while keeping node degrees constant by changing} 
    $p\in[0.15,0.3]$} 
    . (c) Varying train size with constant test size.  We train on graphs with $n$ nodes and constant $p=0.3$. Here, $n$ is drawn uniformly from $[40,x]$ and $x$ varies; test on $n=150$, $p=0.3$. (d) Train on $n$ drawn uniformly from $[40,50]$ and $p=0.3$ test on $n=100$ and varying $p$. See discussion in the text.} 
    \label{fig:validation_plots}
\end{figure*}

\textbf{Graph distribution.} Graphs were drawn from a $G(n,p)$ distribution. This distribution is useful for testing our hypothesis since we can modify the distribution of $d$-patterns simply by changing either $p$ or $n$. For example, 1-patterns represent node degrees, and in this model, the average degree of graphs generated from $G(n,p)$ is $np$. We provide experiments on additional graph distributions like PA in Appendix \ref{append:addtional experiments size gen problem}.

\textbf{Architecture and training protocol.}  We use a GNN as defined in \citep{morris2019weisfeiler} with ReLU activations. The number of GNN layers in the network we use is either $1,2$ or $3$; the width of the teacher network is $32$ and of the student network $64$, providing more expressive power to the student network. We obtained similar results when testing with a width of 32, the same as the teacher network. We use a summation readout function followed by a two-layer fully connected suffix. We use ADAM with a learning rate of $10^{-3}$. We added weight decay ($L_2$ regularization) with $\lambda = 0.1$. We performed a hyper-parameters search on the learning rate and weight decay and use validation-based early stopping on the source domain (small graphs).  The results are averaged over 10  random seeds. We used Pytorch Geometric \citep{fey2019fast} on NVIDIA DGX-1.

\textbf{Experiments} We conducted four experiments, shown in Figure \ref{fig:validation_plots} (a-d). We note that in all the experiments, the loss on the validation set was effectively zero. First, we study the generalization of GNNs by training on a bounded size range $n\in [40,50]$ and varying the test size in $[50,150]$. 

Figure \ref{fig:validation_plots} (a) shows that when $p$ is kept constant while increasing the test graph sizes, size generalization degrades. Indeed, in this case, the underlying $d$-pattern distribution diverges from the training distribution.  \revision{In \appref{append:addtional experiments size gen problem} we demonstrate that this problem persists to larger graphs with up to 500 nodes.}

On the flip side, Figure \ref{fig:validation_plots} (b) shows that when $p$ is properly normalized to keep the degree $np$ constant while varying the graph size then we have significantly better generalization to large graphs. In this case, the $d$-pattern distribution remains similar.

In the next experiment, shown in Figure \ref{fig:validation_plots} (c) we keep the test size constant $n=150$ and vary the training size $n\in[40,x]$ where $x$ varies in $[50,150]$ and $p=0.3$ remains constant. In this case we can see that as we train on graph sizes that approach the test graph sizes, the $d$-pattern discrepancy reduces and generalization improves. 

In our last experiment shown in Figure \ref{fig:validation_plots} (d), we train on $n\in[40,50]$ and $p=0.3$ and test on $G(n,p)$ graphs with $n=100$ and $p$ varying from $0.05$ to $0.5$. As mentioned before, the expected node degree of the graphs is $np$, hence the distribution of $d$-patterns is most similar to the one observed in the training set when $p=0.15$. Indeed, this is the value of $p$ where the test loss is minimized.

\textbf{Conclusions.} 
First, our experiments confirm phenomena \textbf{(A-B)}. 
Another conclusion is that size generalization is more difficult when using deeper networks. This is consistent with our theory since in these cases the pattern discrepancy becomes more severe: for example, $2$-patterns divide nodes into significantly more $d$-pattern classes than $1$-patterns. Further results on real datasets appear in \secref{sec:improve size gen}.

\textbf{Additional experiments.} in  \appref{append:addtional experiments size gen problem},  We show that the conclusions above are consistent along different tasks (max clique, edge count, node regression), distributions (PA and point cloud graphs), and architectures (GIN \citep{xu2018powerful}). We also tried other activation functions (tanh and sigmoid). \revision{Additionally, we experimented with generalization from large to small graphs. Our previous understanding is confirmed by the findings of the present experiment: generalization is better when the training and test sets have similar graph sizes (and similar $d$-pattern distribution).} 
\section{Towards improving size generalization}\label{sec:improve size gen}
The results from the previous sections imply that the problem of size generalization is not only related to the size of the graph in terms of the number of nodes or edges but to the distribution of $d$-patterns. Based on this observation, we now formulate the size-generalization problem as a domain adaptation (DA) problem. We consider a setting where we are given two distributions over graphs: a source distribution \DS{} (say, for small graphs) and a target distribution \DT{} (say, for large graphs). The main idea is to adapt the network to unseen $d$-patterns appearing in large graphs.

We first consider the \emph{unsupervised} DA setting, where we have access to labeled samples from the source \DS{} but the target data from \DT{} is unlabeled. Our goal is to infer labels on a test dataset sampled from the target \DT{}. To this end, we devise a novel SSL task that promotes learning informative representations of unseen $d$-patterns. We show that this approach improves the size-generalization ability of GNNs. 

Second, we consider a \emph{semi-supervised} setup, where we also have access to a small number (e.g., 1-10) of labeled examples from the target \DT{}. We show that such a setup, when feasible, can lead to equivalent improvement, and benefits from our SSL task as well.


\subsection{SSL for DA on graphs} In SSL for DA, a model is trained on unlabeled data to learn a \emph{pretext} task, which is different from the main task at hand. If the pretext task is chosen wisely, the model learns useful representations  \citep{doersch2015unsupervised,gidaris2018unsupervised} that can help with the main task. Here, we train the pretext task on both the source and target domains, as was done for images and point clouds  \citep{sun2019unsupervised,achituve2020self}. The idea is that the pretext task aligns the representations of the source and target domains leading to better predictions of the main task for target graphs.

\textbf{Pattern-tree pretext task.} 
We propose a novel pretext task which is motivated by   sections \ref{sec:bad_global}-\ref{sec: size gen problem empirical validation}: one of the main causes for bad generalization is unseen $d$-patterns in the test set. Therefore, we design a pretext task to encourage the network to learn useful representations for these $d$-patterns.

\begin{wrapfigure}[17]{r}{0.25\textwidth}
\centering
\includegraphics[width=0.25\textwidth]{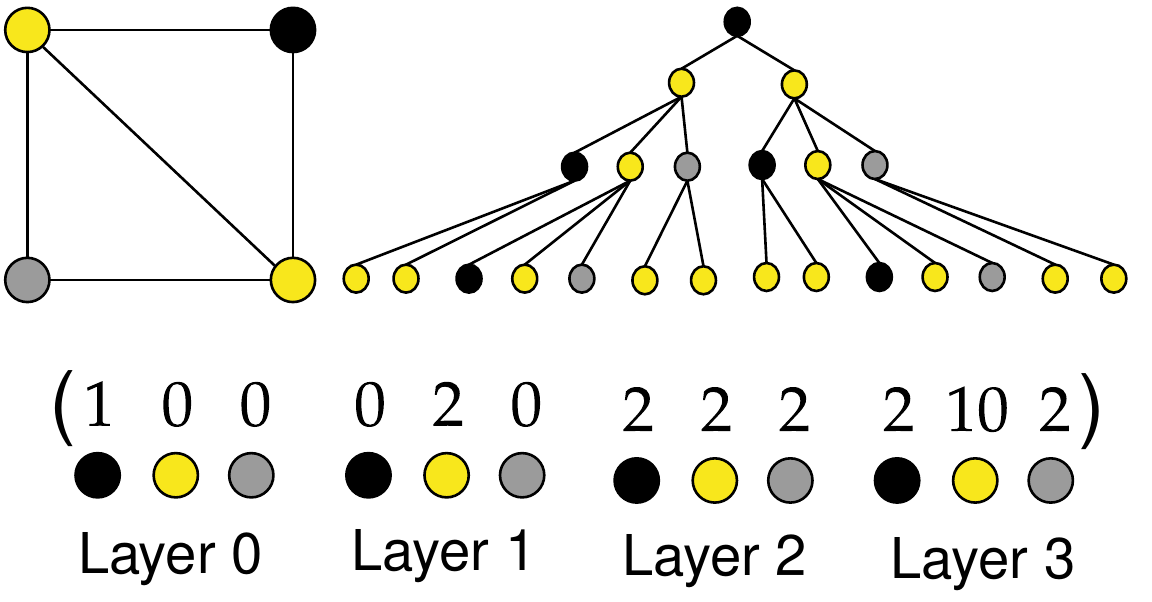}
\caption{\textbf{Top left:} a graph with node features represented by colors. \textbf{Top right:} A tree that represents the $d$-patterns for the black node. \textbf{Bottom: }The tree descriptor is a vector with each coordinate containing the number of nodes from each class in each layer of the tree. \label{fig:pattern tree}} 
\end{wrapfigure}

Our pretext task is a node prediction task in which the output node label is specifically designed to hold important information about the node's $d$-pattern. For an illustration of a label see Figure \ref{fig:pattern tree}.  
The construction of those labels is split into two procedures.

First, we construct a tree that fully represents each node's $d$-pattern. The tree is constructed for a node $v$ in the following way: we start by creating a root node that represents $v$. We then create nodes for all $v$'s neighbors and connect them to the root. All these nodes hold the features of the nodes they represent in the original graph. We continue to grow the tree recursively up to depth $d$ by adding new nodes that represent the neighbors (in the original graph) of the current leaves in our tree.

This is a standard construction, see e.g., \cite{xu2018powerful}. For more details about the construction of the pattern tree see \appref{appen:SSL_tree_task}.

We then calculate a descriptor of the tree that will be used as the SSL output label for each node. The descriptor is a concatenation of histograms of the different node features in each layer of the tree. The network is then trained in a node regression setup with a dedicated SSL head to predict this descriptor.


\subsection{Experiments}

\begin{table*}[t]   
    \setlength{\tabcolsep}{3.5pt}
    \centering
    \begin{sc}
    \scriptsize
    
    \begin{tabular}{ l c c c c c c c c } 
    \textbf{Datasets}     & \textbf{Deezer}  & \textbf{IMDB - B} & \textbf{NCI1}    & \textbf{NCI109} & \textbf{Proteins} & \textbf{Twitch}  & \textbf{DD} & \textbf{Average}
    \\ \hline 
    \textbf{Total-var. distance} & 1 & 0.99 & 0.16 & 0.16 & 0.48 & 1 & 0.15 & -
    \\ \hline 
    \textbf{Small Graphs}  & $56.5 \pm 0.8$ & $63.2 \pm 3.3$ & $75.5 \pm 1.6$ & $78.4 \pm 1.4$ & $75.4 \pm 3.1$ & $69.7 \pm 0.2$ & $71.1 \pm 4.4 $  & 70.0\%                      \\ 
    \hline 
    \textbf{Vanilla}  & $41.1 \pm 6.8$ & $55.9 \pm 7.8$ & $65.9 \pm 4.3$ & $68.9 \pm 3.8$ & $76.0 \pm 8.5$ & $60.5 \pm 3.6$  & $76.3 \pm 3.2 $ & 63.5\%                      \\ 
        \hline 

    \textbf{Homo-GNN}      & $40.5 \pm 6.6$ & $56.3 \pm 7.0$ & $66.0 \pm 3.7$ & $68.8 \pm 3.2$ & $77.1 \pm 10.0$ & $60.8 \pm   2.3$    & \pmb{$76.8 \pm 3.0$}    & 63.8\% \\ 
    \textbf{NM MTL}   & \pmb{$51.6 \pm 8.5$} & $55.6 \pm 6.8$ & $49.9 \pm 7.8$ & $61.7 \pm 5.7$ & $78.8 \pm 8.4$ & $49.5 \pm 2.8$ & $67.4 \pm 5.4$  & 59.2\%                      \\ 
    \textbf{NM PT}   & $50.1 \pm 7.5$ & $54.9 \pm 6.7$ & $51.7 \pm 6.6$ & $55.8 \pm 5.0$ & $78.2 \pm 8.2$ & $48.4 \pm 4.0$  & $60.3 \pm 15.9$  & 57.1\%                       \\ 
    \textbf{GAE MTL}  & $49.4 \pm 11.0$ & $55.5 \pm 6.0$ & $51.2 \pm 9.9$ & $57.6 \pm 9.4$ & $79.5 \pm 11.7$ & $62.5 \pm 5.1$  & $67.8 \pm 10.0$  & 60.5\%                      \\ 
    \textbf{GAE PT}  & $47.1 \pm 10.0$ & $54.1 \pm 6.8$ & $58.9 \pm 7.6$ & $67.2 \pm 5.6$ & $70.5 \pm 9.4$ & $53.6 \pm 4.7$  & $69 \pm 7.1$  & 60.1\%                      \\ 
    \textbf{NML MTL}& $46.4 \pm 9.5$ & $54.4 \pm 7.0$ & $52.3 \pm 6.3$ & $56.2 \pm 6.5$ & $78.7 \pm 6.8$ & $57.4 \pm 4.1$  & $64.7 \pm 11.9$  & 58.6\%                      \\ 
    \textbf{NML PT}  & $48.4 \pm 10.7$ & $53.8 \pm 6.1$ & $54.6 \pm 6.2$ & $56.1 \pm 8.1$ & $76.3 \pm 8.0$  & $54.9 \pm 4.7$ & $61.4 \pm 15.1$ & 57.9\%   
    \\
    \textbf{CL MTL} & $48.2 \pm 10.9$ & $54.6 \pm 6.6$ & $52.2 \pm 6.8$ & $55.7 \pm 5.8$ & $76.6 \pm 7.7$ & $59.4 \pm 3.5$ & $63.6 \pm 15.0$ & $58.6\%$ \\
    \textbf{CL PT} & $47.6 \pm 9.7$ & $53.6 \pm 7.5$ & $57.4 \pm 8.1$ & $57.3 \pm 6.1$ & $77.6 \pm 4.7$ & $53.9 \pm 7.1$ & $69.2 \pm 5.5$ & $59.5\%$
    \\ \hline
    \textbf{Pattern MTL (ours)} & $45.6 \pm 8.8$ & $56.8 \pm 9.2$ & $60.5 \pm 7.5$  & $67.9 \pm 7.2$ & $75.8 \pm 11.1$ & $61.6 \pm 3.5$ & \pmb{$76.8 \pm 3.0$}   & 63.6\%                      \\ 
    \textbf{Pattern PT (ours)}  & $44.0 \pm 7.7$ & \pmb{$61.9 \pm 3.2$} & \pmb{$67.8 \pm 11.7$} & \pmb{$74.8 \pm 5.7$} & \pmb{$84.7 \pm 5.1$} & \pmb{$64.5 \pm 3.3$} & $74.9 \pm 5.2$ & \textbf{67.5\%}             \\ \hline
    \end{tabular}
    \end{sc}
    \caption{Test accuracy  of compared methods in 7 binary classification tasks. The Pattern tree method with pretraining achieves the highest accuracy in most tasks and increases the average accuracy from 63\% to 67\% compared with the second-best method. High variance is due to the domain shift between the source and target domain.  
    }
    \label{tab:real datasets}
\end{table*}

\textbf{Baselines.} We compare our new pretext task to the following baselines: (1) \textbf{Vanilla}: standard training on the source domain;
(2) \textbf{HomoGNN}  \citep{tang2020towards}  a homogeneous GNN without the bias term trained on the source domain; (3) \textbf{Graph autoencoder (GAE)} pretext task \citep{kipf2016variational}; (4) \textbf{Node masking (NM)} pretext task from \cite{hu2019strategies} where at each training iteration we mask $10\%$ of the node features and the goal is to reconstruct them. In case the graph does not have node features then the task was to predict the degree of the masked nodes. (5) \textbf{Node metric learning (NML)}: we use metric learning to learn useful node representations. We use a corruption function that given a graph and corruption parameter $p\in[0,1]$, replaces $p|E|$ of the edges with random edges, and thus can generate positive ($p=0.1$) and negative ($p=0.3$) examples for all nodes of the graph. We train with the triplet loss \citep{weinberger2009distance}. \revision{(6) \textbf{Contrastive learning (CL)}: In each iteration, we obtain two similar versions of each graph, which are used to compute a contrastive loss \cite{qiu2020gcc,you2020graph} against other graphs. We follow the protocol of \cite{you2020graph}, using a corruption function of edge perturbation that randomly adds and removes $5\%$ of the edges in the graph.}

\textbf{Datasets.}
We use datasets from \cite{Morris+2020} and \cite{karateclub} (Twitch egos and Deezer egos). We selected datasets that have a sufficient number of graphs (more than 1,000) and with a non-trivial split to small and large graphs as detailed in \appref{appen:datasets statistics}. In total, we used 7 datasets, 4 from molecular biology (NCI1, NCI109, D\&D, Proteins), and $3$ from social networks (Twitch ego nets, Deezer ego nets, IMDB-Binary). In all datasets, $50\%$ smallest graphs were assigned to the training set, and the largest $10\%$ of graphs were assigned to the test set. We further split a random $10\%$ of the small graphs as a validation set.
\begin{figure}[t]
    \centering
\includegraphics[width=0.95\linewidth]{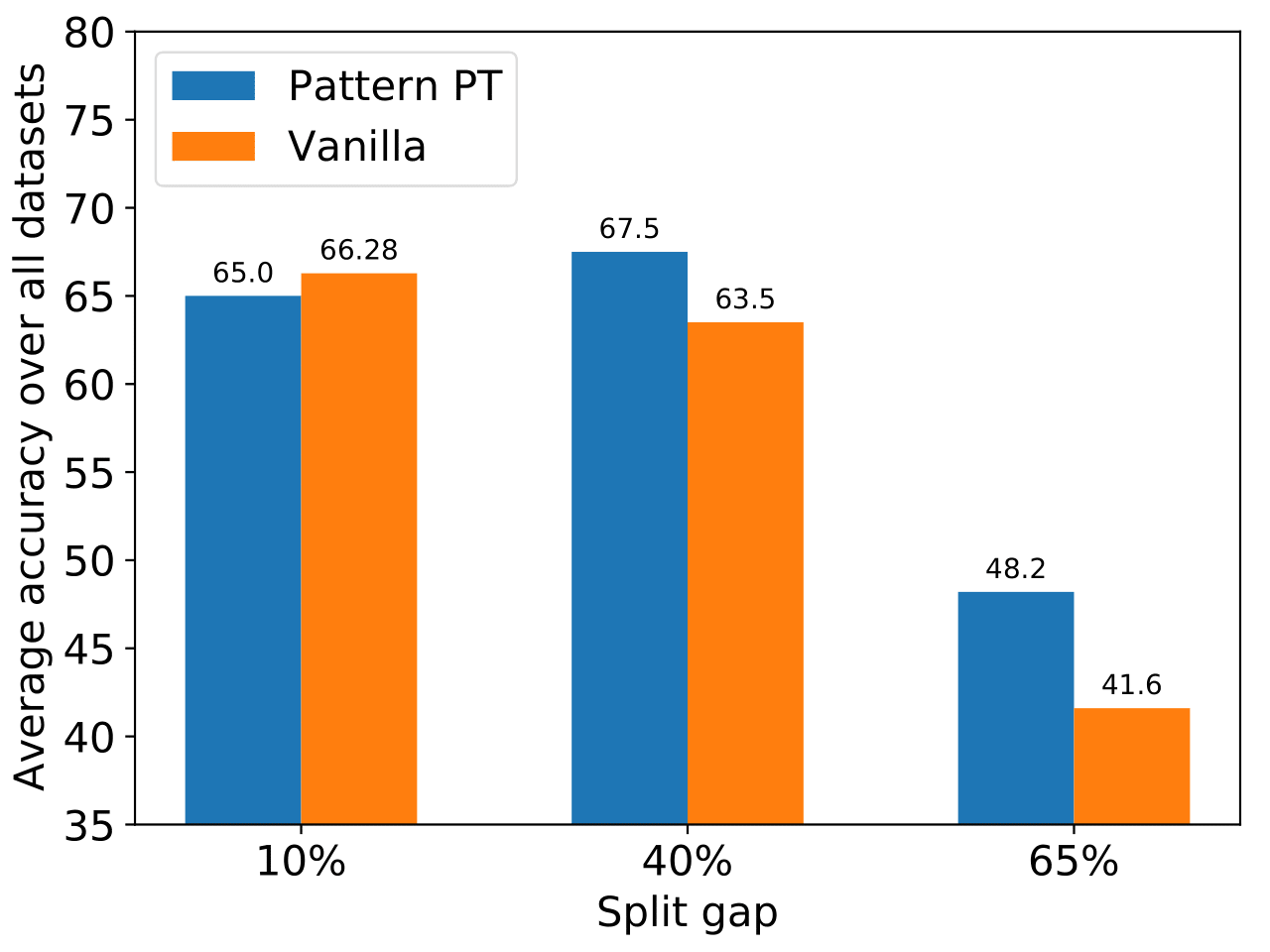}
\caption{ \revision{Average {accuracy} on different size splits in the unsupervised setup for (i) $d$-pattern pretraining and (ii) no SSL (Vanilla). {Accuracy is} averaged over all the datasets in table \ref{tab:real datasets}. }} 
\label{fig:diff_splits}
\end{figure}

\textbf{Architecture and training protocol.} The setup is the same as in \secref{sec: size gen problem empirical validation} with a three-layer GNN in all experiments. Given a pretext task we consider two different training procedures: (1) \textbf{Multi-task learning (MTL)} \citep{you2020does}; (2) \textbf{Pretraining (PT)} \citep{hu2019strategies}. For MTL we use equal weights for the main and SSL tasks. In the semi-supervised setup, we used equal weights for the source and target data. More details on the training procedures and the losses can be found in \appref{appen:training procedure}. 


\textbf{$d$-pattern distribution in real datasets.} In \appref{appen:counting patterns} we study the discrepancy between the local patterns between small and large graphs on all the datasets mentioned above. The second row of Table \ref{tab:real datasets} summarizes our findings with the total variation ($TV$) distances between $d$-pattern distributions of small and large graphs. The difference between these distributions is severe for all social network datasets ($TV\approx 1$), and milder for biological datasets ($TV \in [0.15,0.48]$). 

Next, we will see that a discrepancy between the $d$-patterns leads to bad generalization and that correctly representing the patterns of the test set improves performance. 


\textbf{Results for unsupervised DA setup.} 
Table \ref{tab:real datasets} compares the effect of using the Pattern-tree pretext task to the baselines described above. The \emph{small graphs} row presents vanilla results on a validation set with small graphs for comparison. The small graph accuracy on 5 out of 7 datasets is larger by 7.3\%-15.5\% than on large graphs, indicating that the size-generalization problem is indeed prevalent in real datasets.

Pretraining with the $d$-patterns pretext task outperforms other baselines in 5 out 7 datasets, with an average $4\%$ improved accuracy on all datasets.  HOMO-GNN slightly improves over the vanilla while other pretext tasks do not improve average accuracy. Specifically, for the datasets with high discrepancy of local patterns (namely, IMDB, Deezer, Proteins, and Twitch), pretraining with our SSL task improves nicely over vanilla training (by $5.4\%$ on average). Naturally, the accuracy here is lower than SOTA on these datasets because the domain shift makes the problem harder.

\revision{\figref{fig:diff_splits} {shows two additional experiments, conducted on all  datasets using different size splits. First,  using a gap of 65\% (training on the 30\% smallest graphs and testing on the 5\% largest graphs), and second, using} a gap of 10\% (training on the 50\% smallest graphs and testing on graphs in the 60-70-percentile). The results are as expected: (1) When training without SSL, larger size gaps hurt more (2) SSL improves over Vanilla training with larger gaps.}

\textbf{Results for semi-supervised DA setup.} Figure \ref{fig:few_shot_vanilla_vs_pattern} compares the performance of vanilla training versus pretraining with the pattern-tree pretext task in the semi-supervised setup. As expected, the accuracy monotonically increases with respect to the number of labeled examples in both cases. Still, we would like to highlight the improvement we get by training on only a handful of extra examples. Pretraining with the pretext task yields better results in the case of 0,1,5 labeled examples and comparable results with 10 labeled examples.

\textbf{Additional experiments} We provide additional experiments on the synthetic tasks discussed in \secref{sec: size gen problem empirical validation} in Appendix \ref{sec:more experiments from sec solution}. We show that the pattern-tree pretext task improves generalization in the student-teacher setting (while not solving the edge count or degree prediction tasks). In addition, adding even a single labeled sample from the target distribution significantly improves performance. \revision{We additionally tested our SSL task on a combinatorial optimization problem of finding the max clique size in the graph, our SSL improves over vanilla training by a factor of 2, although not completely solving the problem. Also, we tested on several tasks from the "ogbg-molpcba" dataset (see \cite{hu2020open}), although the results are inconclusive. This is further discussed in \secref{sec:discussion}.}

\begin{figure}[t]
\centering
    \includegraphics[width=0.5\textwidth]{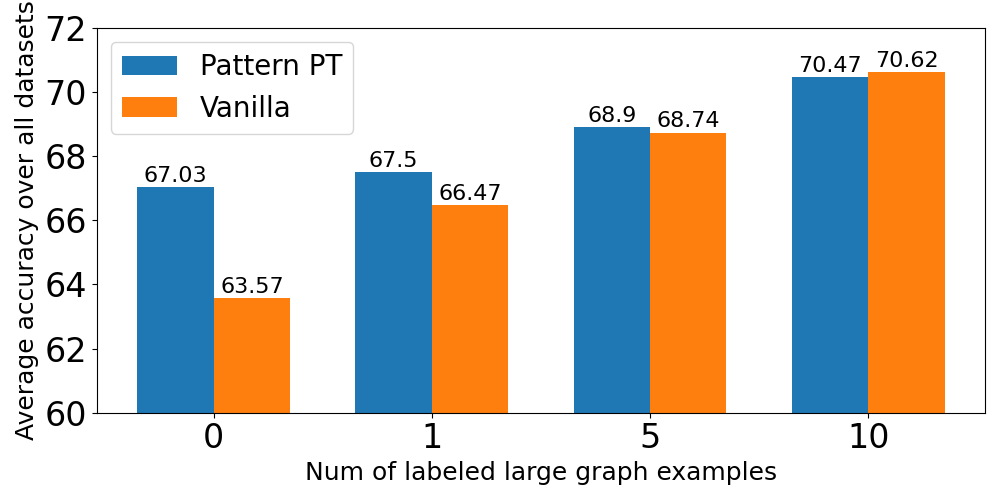}
    \caption{Average classification results in the semi-supervised setup  for (i) $d$-pattern pretraining and (ii) no SSL (Vanilla). Results were averaged over all the datasets in table \ref{tab:real datasets}. }
    \label{fig:few_shot_vanilla_vs_pattern}
\end{figure}



\section{Related work}

\textbf{Size generalization.} Several papers observed successful generalization across graph sizes, but the underlying reasons were not investigated~\citep{li2018combinatorial, maron2018invariant, luz2020learning}. More recently, \citep{velivckovic2019neural} showed that when training GNNs to perform simple graph algorithms step by step they generalize better to graphs of different sizes. Unfortunately, such training procedures cannot be easily applied to general tasks. \cite{knyazev2019understanding} studied the relationship between generalization and attention mechanisms. \cite{bevilacqua2021on} study graph extrapolation using causal modeling. On the more practical side, \cite{joshi2019efficient, joshi2020learning, khalil2017learning}, study the Traveling Salesman Problem (TSP), and show empirically that size generalization on this problem is hard. \cite{corso2020principal} study several multitask learning problems on graphs and evaluate how the performance changes as the size of the graphs change. 
In another line of work,  \citet{tang2020towards, nachmani2020molecule} considered adaptive depth GNNs. 
In our paper, we focus on the predominant GNN architecture with a fixed number of message-passing layers.
Several works also studied size generalization and expressivity when learning set-structured inputs \citep{zweig2020functional, bueno2020limitations}. 
In \cite{santoro2018measuring} the authors study generalization in abstract reasoning.  

\textbf{Generalization in graph neural networks.} 
Several works studied generalization bounds for certain classes of GNNs \citep{garg2020generalization, puny2020graph,verma2019stability, liao2020pac, du2019graph}, but did not discuss size generalization. \cite{sinha2020evaluating} proposed a benchmark for assessing the logical generalization abilities of GNNs. 

\textbf{self-supervised and unsupervised learning on graphs.} 
One of the first papers to propose an unsupervised learning approach for graphs is \cite{kipf2016variational}, which resulted in several subsequent works \citep{park2019symmetric, salha2019keep}. \citep{velickovic2019deep} suggested an unsupervised learning approach based on predicting global graph properties from local node descriptors. \citep{hu2019strategies} suggested several unsupervised learning tasks that can be used for pretraining. More recently, \citep{jin2020self, you2020does} proposed several self-supervised tasks on graphs, such as node masking. These works mainly focused on a single graph learning setup.  
\revision{
\cite{you2020graph,qiu2020gcc} applied contrastive learning techniques for unsupervised representation learning on graphs. The main difference between our SSL task and contrastive learning is that following our theoretical observation, our SSL task focuses on representing the local structure of each node, rather than a representation that takes into account the entire graph.  
}

\section{Conclusion and Discussion}\label{sec:discussion} 

This work is a step towards gaining an understanding of the size-generalization problem in graph neural networks. We showed that for important graph distributions, GNNs do not naturally generalize to larger graphs even on simple tasks.  We started by defining $d$-patterns, a concept that captures the expressivity of GNNs. We then characterized how the failure to generalize depends on $d$-patterns. Lastly, we suggested two approaches that can improve generalization. Although these approaches are shown to be useful for multiple tasks, there are still some tasks where generalization could not be improved. 

A limitation of our approach is that it assumes categorical node features and bidirectional edges with no features. We plan to expand our approach in the future to address these important use cases.
As a final note, our characterization of $d$-patterns, as well as the methods we proposed, can be applied to other cases where generalization is hindered by distribution shifts, and may also be able to improve results in these situations.




\bibliography{example_paper}
\bibliographystyle{icml2021}

\newpage

\onecolumn

\appendix
\icmltitle{Supplementary Material: From Local Structures to Size Generalization in Graph Neural Networks}

\section{Size generalization in Single-layer GNNs}\label{appen:linear GNN}
We start our discussion on size generalization with a theoretical analysis of a simple setup. We consider a single-layer GNN and an easy task and show that: (1) The training objective has many different solutions, but only a small subset of these solutions generalizes to larger graphs (2) Simple regularization techniques cannot mitigate the problem.

Assume we train on a distribution of graphs. Our task is to predict the number of edges in the graph using a first-order GNN with a single linear layer and additive readout function, for simplicity also consider the squared loss. We first note that the task of edge count can be solved with this architecture for graphs of any size. The intuition is to count the number of neighbors for each node (can be done with a 1-layer GNN), and summing over all nodes using the readout function would give us $2|E|$, where $E$ is the set of edges.

The objective boils down to the following function for any graph $G$ in the training set:
\[
L(w_1,w_2,b;G) = \left(\sum_{u\in V(G)}\left(w_1 \cdot x_u + \sum_{v\in \mathcal{N}(u)} w_2\cdot x_v + b\right) - y\right)^2~.
\]

Here, $G$ is an input graph, $V(G)$ are the nodes of $G$, $\mathcal{N}(v)$ are all the neighbors of node $v$, $w_1,w_2$ and $b$ are the trainable parameters, $y$ is the target and $x_v$ is the node feature for node $v$. Further, assume that we have no additional information on the nodes, so we can just embed each node as a one-dimensional feature vector with a fixed value of $1$. In this simple case, the trainable parameters are also one-dimensional.

For a graph with $n$ nodes and $m$ edges the training objective can also be written in the following form:
\[
L(w_1,w_2,b;G) = \left(nw_1 + 2mw_2 + nb - m\right)^2,\]
One can easily find the solution space, which is an affine subspace defined by $w_2 = \frac{m-n(w_1+b)}{2m} = \frac{1}{2} - \frac{n}{2m}\cdot \frac{w_1+b}{2m}$. In particular, the solutions with $w_1+b=0,~w_2=1/2$  are the only ones which do not depend on the specific training set graph size $n$, and generalize to graphs of any size and with any number of edges.  

It can be readily seen that when training the model on graphs where the ratio $n/m$ between the number of nodes and number of edges is fixed, gradient descent with a small enough learning rate will favor the global solution closest to the initialized point. Hence, by using a standard initialization scheme (e.g. \cite{glorot2010understanding}), with probability 1, the solution that gradient descent converges to is not a generalizing solution. Note that we could train on many graphs with different sizes and still end up in a non-generalizing solution, as long as $n/m$ is fixed. On the other hand, training on graphs with different node/edge ratios necessarily leads to some generalizing solution. This is because the problem is convex, and the generalizing solutions are the only solutions that minimize the loss for graphs with different ratios.

We also note that the generalizing solution ($w_1+b=0,~w_2=1/2$) is not  the least norm solution in general (with respect to both $L_1$ and $L_2$ norms) so simple regularization will not help here (it is the least $L_1$ norm solution if $2m >n$). As we show in \secref{sec: size gen problem empirical validation}, the problem gets worse when considering GNNs with multiple non-linear layers, and this simple solution will not help in this case: we can train deeper GNNs on a wide variety of sizes and the solution will not generalize to other sizes.

\section{Proofs from \secref{sec:local graph patterns}}\label{appen:proofs from theoretical results}

\begin{proof}[Proof of \thmref{thm:d-patterns constant}]
We show that from the definition of $d$-patterns, and the 1-WL algorithm (see \cite{weisfeiler1968reduction}), the color given by the WL algorithm to two nodes is equal iff their $d$-pattern is equal. For the case of $d=0$, it is clear. Suppose it is true for $d-1$, the WL algorithm at iteration $d$ give node $v$ a new color based on the colors given in iteration $d-1$ to the neighbors of $v$. This means, that the color of $v$ at iteration $d$ depends on the multiset of colors at iteration $d-1$ of its neighbors, which by induction is the $(d-1)$-pattern of the neighbors of $v$. To conclude, we use Theorem 1 from \cite{morris2019weisfeiler} which shows that GNNs are constant on the colors of WL, hence also constant on the $d$-patterns.
\end{proof}

To prove \thmref{thm:overfit} we will first need to following claim from \cite{yun2019small} about the memorization power of ReLU networks:

\begin{theorem}\label{thm:small relu memorize}
Let $\{\bx_i,y_i\}_{i=1}^N\in\mathbb{R}^d\times\mathbb{R} $ such that all the $\bx_i$ are distinct and $y_i\in[-1,1]$ for all $i$. Then there exists a $3$-layer fully connected ReLU neural network $f:\mathbb{R}^d\rightarrow\mathbb{R}$ with width $2\sqrt{N}$ such that $f(\bx_i)=y_i$ for every $i$.
\end{theorem}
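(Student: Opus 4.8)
The statement is a known result of \citet{yun2019small}; I sketch the strategy I would follow to prove it from scratch. The plan is to reduce the problem to one-dimensional memorization and then exploit the fact that two ReLU layers of width $\approx\sqrt N$, connected by a dense middle weight matrix carrying $\Theta(N)$ parameters, have enough capacity to match $N$ independent target values if the layers are organized correctly.

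First I would pick a direction $\bw\in\mathbb R^d$ that is not orthogonal to any of the finitely many difference vectors $\bx_i-\bx_j$, $i\neq j$; such a $\bw$ exists because these orthogonality constraints cut out finitely many proper hyperplanes, whose union has measure zero. Setting $z_i:=\bw^\top\bx_i$ yields $N$ distinct reals, and after relabelling we may assume $z_1<z_2<\dots<z_N$. Since $\bx\mapsto\bw^\top\bx$ is linear, it can be folded into the first affine map of the network, so it suffices to build a $3$-layer (two hidden layers) scalar-input ReLU network $g:\mathbb R\to\mathbb R$ of width $2\lceil\sqrt N\rceil$ with $g(z_i)=y_i$ for all $i$.

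Next I would organize the scalars into a grid. Let $r=\lceil\sqrt N\rceil$ and split the sorted sequence into $r$ consecutive \emph{blocks} $B_1,\dots,B_r$ of size at most $r$ each; between consecutive blocks there is a strictly positive gap, and I place every threshold/fold point used below inside one of these gaps, so that when a ramp is evaluated on an actual data point it is either identically zero or already in its linear regime. The first hidden layer (width $\le 2r$) computes, for each point, a feature vector that (i) identifies its block — e.g.\ the ramps $\sigma(z-s_k)$ with $s_k$ in the $k$-th gap, whose pattern of nonzero coordinates records the block index, or differences $\sigma(z-\alpha_k)-\sigma(z-\beta_k)$ that act as exact step functions on the data — and (ii) retains a within-block coordinate monotone in $z$ (a single ramp $\sigma(z-s_0)$ with $s_0$ below all data already recovers $z$ up to a constant). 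The second hidden layer (width $\le 2r$), together with the dense $2r\times 2r$ middle weight matrix, is then arranged so that on block $B_\ell$ each second-layer neuron has a block-dependent affine pre-activation: a linear functional $\sum_k M_{jk}\,\sigma(z-s_k)$ equals $z\sum_{k\le \ell-1}M_{jk}-\sum_{k\le\ell-1}M_{jk}s_k$, so both the slope in $z$ and the kink location of neuron $j$ vary with $\ell$ through partial sums of the $j$-th row of $M$. Summing the $r$ second-layer outputs with the output weights therefore realizes a piecewise-linear function of $z$ that differs from block to block; choosing $M$, the biases, and the output weights so that this function interpolates the $|B_\ell|\le r$ prescribed values on each $B_\ell$ gives $g$ with $g(z_i)=y_i$.

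The main obstacle is the width bookkeeping that makes the bound \emph{tight}: both hidden layers must stay within $2\sqrt N$ simultaneously, which is precisely why the two-level (outer block / inner position) decomposition is needed and why the $\Theta(N)$ entries of the middle matrix must do the work rather than adding $N$ breakpoints in a single layer. Two points need genuine care: first, showing the resulting nonlinear interpolation system is feasible — that the row vectors of $M$ can steer the effective affine parameters of the second-layer neurons so that, block by block, the constructed ReLU combination hits the right $\le r$ values with all kinks landing where intended, while the block-selecting coordinates prevent blocks from interfering; and second, handling the degenerate cases ($N$ not a perfect square, unequal block sizes, clustered $z_i$) by padding to $r^2$ points with dummy targets in $[-1,1]$ and using the positive inter-block gaps. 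Once these are in place, the construction is a $3$-layer ReLU network of width $2\sqrt N$ that memorizes all $N$ pairs.
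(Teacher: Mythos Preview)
The paper does not prove this theorem; it is quoted verbatim from \cite{yun2019small} and invoked as a black box in the proof of \thmref{thm:overfit}. You correctly identify this at the outset. Your sketch --- project to one dimension along a generic direction, partition the sorted scalars into $\lceil\sqrt N\rceil$ blocks of size at most $\lceil\sqrt N\rceil$, use the first ReLU layer to encode block membership and a within-block coordinate, and let the dense $O(\sqrt N)\times O(\sqrt N)$ middle weight matrix together with the second ReLU layer carry out block-dependent interpolation --- is the same high-level architecture as the construction in \cite{yun2019small}. Since the present paper contains no proof to compare against, there is nothing to contrast; your outline is faithful to the cited source, with the caveat that the feasibility of the block-wise interpolation system (which you yourself flag as the main obstacle) is where the real work in \cite{yun2019small} resides, and your description of that step remains at the level of a plan rather than an argument.
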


We will also need the following lemma which will be used in the construction of each layer of the GNN:
\begin{lemma}\label{lem:overfit single node}
Let $N \in\mathbb{N}$ and $f:\mathbb{N}\rightarrow\mathbb{R}$ be a function defined by $f(n) = \bw_2^\top\sigma(\bw_1n - \bb)$ where $\bw_1,\bw_2,\bb\in\mathbb{R}^N$, and $\sigma$ is the ReLU function. Then for every $y_1,\dots,y_N\in\mathbb{R}$ there exists $\bw_1,\bw_2,\bb$ such that $f(n) = y_n$ for $n\leq N$ and $f(n) = (n-N+1)y_N  - (n-N)y_{N-1}$ for $n > N$.
\end{lemma}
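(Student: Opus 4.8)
The plan is to realize $f$ as a one-hidden-layer ReLU network that computes a continuous piecewise-linear function of the scalar input $n$, with its $N$ kinks placed so that the function interpolates the prescribed values and then continues affinely. Since $\bw_1,\bw_2,\bb\in\mathbb{R}^N$ we have exactly $N$ hidden units at our disposal, so I would fix all first-layer weights to $1$ and take the biases to be $0,1,\dots,N-1$; that is, I look for
\[
f(n)=\sum_{k=0}^{N-1} c_k\,\sigma(n-k),
\]
which is of the required form with $\bw_1=(1,\dots,1)$, $\bb=(0,1,\dots,N-1)$ and $\bw_2=(c_0,\dots,c_{N-1})$, and where the $k=0$ unit contributes $\sigma(n)=n$ on $\mathbb{N}$ and so plays the role of an affine ``baseline''.

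Next I would pin down $c_0,\dots,c_{N-1}$ from the $N$ interpolation equations $f(j)=y_j$, $j=1,\dots,N$. For $1\le j\le N$ only the summands with index $k\le j-1$ are active, so $f(j)=\sum_{k=0}^{j-1}c_k(j-k)$; ordering the equations by increasing $j$ and the unknowns as $c_0,c_1,\dots,c_{N-1}$, the unknown $c_{j-1}$ first occurs in the $j$-th equation, with coefficient $j-(j-1)=1$. Hence this linear system is lower triangular with unit diagonal, so it has a (unique) solution for every right-hand side $(y_1,\dots,y_N)$, and this choice of coefficients already yields $f(n)=y_n$ for all $n\le N$.

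It remains to check the extrapolation for $n>N$. All kinks of $f$ sit at $0,1,\dots,N-1$, each of which is $\le N-1$; therefore at every integer $n\ge N-1$ all $N$ ReLUs are in their linear regime, so on $\{N-1,N,N+1,\dots\}$ the function $f$ agrees with the single affine function $h(x)=\sum_{k=0}^{N-1}c_k(x-k)$. Since $h(N-1)=f(N-1)=y_{N-1}$ and $h(N)=f(N)=y_N$, $h$ must be the line through $(N-1,y_{N-1})$ and $(N,y_N)$, i.e. $h(x)=y_{N-1}+(x-N+1)(y_N-y_{N-1})$; expanding and evaluating at an integer $n>N$ gives $f(n)=(n-N+1)y_N-(n-N)y_{N-1}$, as claimed.

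I do not expect a genuine obstacle here, as this is a one-dimensional piecewise-linear interpolation argument; the only points requiring a little care are (i) the bookkeeping that biases $0,1,\dots,N-1$ use precisely the $N$ available hidden units, (ii) the use of $\sigma(n)=n$ on the positive integers, which lets the $k=0$ unit serve as an affine baseline so that no extra unit is needed, and (iii) the observation that placing no kink beyond $N-1$ is exactly what forces $f$ to extrapolate affinely through its last two interpolation points. The trivial case $N=1$, where the two previous values are not supplied, can be dismissed with a single constant unit.
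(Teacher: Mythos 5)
Your construction is identical to the paper's: you take $\bw_1=\mathbf{1}$, $\bb=(0,1,\dots,N-1)^\top$, and determine $\bw_2$ from the interpolation constraints, which you observe form a unit-lower-triangular system — this is precisely the recursive definition $a_i=y_i-2a_{i-1}-\dots-ia_1$ used in the paper. The only difference is in verifying the $n>N$ formula: the paper expands $f(N+k)$ directly and telescopes the sums to obtain $(k+1)y_N-ky_{N-1}$, while you note that all kinks lie at or below $N-1$, so $f$ is affine on $[N-1,\infty)$ and must therefore be the line through its last two interpolation points; this is a cleaner and less error-prone way to reach the same conclusion but is not a genuinely different route.
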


\begin{proof}
Define $\bw_1 = \begin{pmatrix} 1 \\ \vdots \\ 1 \end{pmatrix}$, $\bb = \begin{pmatrix} 0 \\ 1 \\ \vdots \\ N-1\end{pmatrix}$. Let $a_i$ be the $i$-th coordinate of $\bw_2$, we will define $a_i$ recursively in the following way: Let $a_1=y_1$, suppose we defined $a_1,\dots,a_{i-1}$, then define $a_i= y_i - 2a_{i-1} - \dots - ia_1$. Now we have for every $n\leq N$:
\begin{align*}
    f(n) = \bw_2^\top\sigma(\bw_1n - \bb) = na_1 + (n-1)a_2 + \dots + a_n = y_n~.
\end{align*}
For $n > N$ we can write $n = N+k$ for $k\geq 1$, then we have:
\begin{align*}
    f(n) &= \bw_2^\top\sigma(\bw_1(N+k) - \bb) = (N+k)a_1 + (N+k-1)a_2 + \dots + (k+1)a_N \\
    &= y_N + k(a_1+a_2 + \dots + a_N) = y_N + k(y_N -a_{N-1} - 2a_{N-2} - \dots - (N-1)a_1) \\
    &= (k+1)y_N  - ky_{N-1}
\end{align*}
\end{proof}

Now we are ready to prove the main theorem:

\begin{proof}[Proof of \thmref{thm:overfit}]
We assume w.l.o.g that at the first iteration each node $i$ is represented as a one-hot vector $\bh^{(0)}_i$ of dimension $|C|$, with its corresponding node feature. Otherwise, since there are $|C|$ node features we can use one GNN layer that ignores all neighbors and only represent each node as a one-hot vector. We construct the first $d$ layers of the $1$-GNN $F$ by induction on $d$. Denote by $a_i=|C|\cdot \left(N^i + N^{i-1}\right)$ the dimension of the $i$-th layer of the GNN for $1\leq i\leq d$, and $a_0 = |C|$.

The mapping has two parts, one takes the neighbors information and maps it into a feature representing the multiset of $d$-patterns, the other part is simply the identity-saving information regarding the d-pattern of the node itself.\\

The $d$ layer structure is 
\begin{equation*}
    \bh_v^{(d)} = U^{(d+1)}\sigma\left(W_2^{(d)}\bh_v^{(d-1)}  + \sum_{u\in\mathcal{N}(v)}W_1^{(d)}\bh_u^{(d-1)} - \bb^{(d)}\right)    
\end{equation*}

We set $W_2^{(d)} =[0,I]^T,\,W_1^{(d)}=[\tilde{W}_1^{(d)T},0]^T$ and $U^{(d+1)}=[\tilde{U}^{(d+1)T},0]^T$ with $\tilde{W}_1^{(d)}\in\R^{Na_{d-1}\times a_{d-1}}$ and $\tilde{U}^{(d+1)}\in\R^{Na_{d-1}\times Na_{d-1}}$. For $\tilde{W}_1^{(d)}$ we set $\bw_i^{(1)}$, its $i$-th row, to be equal to $\be_n$ where $n=\left\lceil\frac{i}{N}\right\rceil$. Let $b^{(1)}_i$ be the $i$-th coordinate of $\bb^{(d)}$, be equal to $i-1~(\text{mod } N)$ for $i\leq N\cdot a_{d-1}$ and zero otherwise. What this does for the first $a_{d-1}$ elements of $W_2^{(d)}\bh_v^{(d)}  + \sum_{u\in\mathcal{N}(v)}W_1^{(d)}\bh_u^{(d)}$ is that each dimension $i$ hold the number of neighbors with a specific (d-1)-pattern. We then replicate this vector N times, and for each replica we subtract a different bias integer ranging from $0$ to $N-1$. To that output we concatenate the original $\bh_v^{(d-1)}$\\

 Next we construct $\tilde{U}^{(d+1)}\in \mathbb{R}^{Na_{d-1}\times Na_{d-1}}$ in the following way: Let $\bu^{(d+1)}_i$ be its $i$-th row, and $u^{(d+1)}_{i,j}$ its $j$-th coordinate. We set $u^{(d+1)}_{i,j}=0$ for every $j$ with $j\neq \left\lceil\frac{i}{N}\right\rceil$ and the rest $N$ coordinates to be equal to the vector $\bw_2$ from \lemref{lem:overfit single node} with labels $y_\ell = 0$ for $\ell \in \{1,\dots,N\}\setminus \{(i\text{ mod } N)+1\}$ and $y_\ell=1$ for $\ell = (i\,\text{ mod } N)+1$.

Using the above construction we encoded the output on node $v$ of the first layer of $F$ as a vector:

This encoding is such that the $i$-th coordinate of $\bh_v^{(d+1)}$ for $1\leq i\leq N\cdot a_{d-1}$ is equal to $1$ iff node $v$ have $(i\text{ mod } N)+1 $ neighbors with node feature $\left\lceil \frac{i}{N}\right\rceil \in \{1,\dots,|C|\}$. The last $a_{d-1}$ rows are a copy of $\bh_v^{(d-1)}$.

\textbf{Construction of the suffix.} Next, we construct the last two layers. First we note that for a node $v$ with $d$-pattern $p$ there is a unique vector $\bz_p$ such that the output of the GNN on node $v$, $\bh^{(d)}_v$, is equal to $\bz_p$. From our previous construction one can reconstruct exactly the (d-1)-pattern of each node, and the exact number of neighbors with each (d-1)-pattern and therefore can recover the d-pattern correctly from the $\bh_v^{(d)}$ embedding. 

Let $y_{\max} := \max_i |y_i|$, and define $\tilde{y}_i = y_i / y_{\max}$.
Finally, we use \thmref{thm:small relu memorize} to construct the $3$-layer fully connected neural network with width at most $2\sqrt{|P|}$ such that for every pattern $p_i\in P$ with corresponding unique representation $\bz_{p_i}$ and label $\tilde{y}_i$, the output of the network on $\bz_{p_i}$ is equal to $\tilde{y}_i$. We construct the last two layers of the GNN such that $W_1^{(d+1)}, W_1^{(d)} =0$, and $W_2^{(d+1)},\bb^{(d+1)},W_2^{(d+2)},\bb^{(d+2)}, W^{(d+3)}$ are the matrices produced from \thmref{thm:small relu memorize}. Note that $W^{(d+3)}$ is the linear output layer constructed from the theorem, thus the final output of the GNN for node $v$ is $W^{(d+3)}\cdot h^{(d+2)}_v$, where $h^{(d+2)}_v$ is the output after $d+2$ layers. We multiply the final linear output layer by $y_{\max}$, such that the output of the entire GNN on pattern $p_i$ is exactly $y_i$.
\end{proof}

\section{Proofs from \secref{sec:bad_global}}\label{appen:proof from sec corollaries}
\begin{proof}[Proof of \thmref{thm:graph tasks}]
    By the assumption, there is a depth $d$ GNN that solves the task for graphs of any size, denote this GNN by $F$. We can write $F$ operating on a graph $G$ as $F(G) = M\left(\sum_{v\in V(G)}H(G)_v\right)$, where $V(G)$ are the nodes of $G$, $H$ is a $d$-layer GNN with $H(G)_v$ is the output of $H$ on node $v$, and $M$ is an MLP. Note that we used the summation readout function which sums the output of the GNN over all the nodes of the input graph. 
    
    We will construct a new GNN $F'$ in the following way: Let $P$ be the set of $d$-patterns which appear in $P_1^d$ and $\tilde{P}$ all the $d$-patterns which appear in $P_2^d$ but not in $P_1^d$. Note that by the assumption that the distributions are with finite support, both $P$ and $\tilde{P}$ are finite. Suppose that $H(G)_v\in\mathbb{R}^k$, that is the dimension of the output feature vector for each $v\in V(G)$ is $k$-dimensional. We define a $d+2$-layer GNN $H'$ with output dimension $k+1$, such that:
    \begin{enumerate}
        \item For each $p\in P$ and node $v$ with $d$-pattern $p$, the output of $H'$ on $v$ is equal to $\begin{pmatrix}H(G)_v \\ 0 \end{pmatrix}$.
        \item For each $\tilde{p}\in \tilde{P}$ and node $\tilde{v}$ with $d$-pattern $\tilde{p}$, the output of $H'$ on $\tilde{v}$ is equal to $\begin{pmatrix}0 \\ 1 \end{pmatrix}$.
    \end{enumerate}
    This construction is possible using \thmref{thm:overfit} since both $P$ and $\tilde{P}$ are finite. We define an MLP $M'$ which have one more layer than $M$ in the following way: All the weights in all the layers except the last one are block matrices of the form $\begin{pmatrix} W ~ &\pmb{0} \\ \pmb{0}^\top ~ &1\end{pmatrix}$, where $W$ is the original weight matrix from $M$, and $\pmb{0}$ is a vector of zeroes of the corresponding size. Let $y\in\mathbb{R}$ be the maximum output of the task (in absolute value) over all the graphs from both $P_1$ and $P_2$. We define the last layer of $M'$ as: $\begin{pmatrix} 1 &0 \\0 & 2y \end{pmatrix}$. Finally, define $F'(G) = M'\left(\sum_{v\in V(G)}H'(G)_v\right)$. 
    
    We will now show the correctness of the construction. For any $d$-pattern $p\in P$, and $v$ with $d$-pattern $p$ it is clear that $H'(G)_v = \begin{pmatrix}H(G)_v \\ 0\end{pmatrix}$. Hence, for every graph $G$ coming from the distribution $P_1$ we have that 
    \[
    F'(G) = M'\left( \begin{pmatrix}\sum_{v\in V(G)}H(G)_v \\ 0\end{pmatrix} \right) =F(G)\] On the other side, let $G$ be some graph with from the distribution $P_2$. Then, there is some $\tilde{v}\in V(g)$ with $\tilde{v}\in\tilde{P}$, hence we have that $\sum_{v\in V(G)}H'(G)_v = \begin{pmatrix} \pmb{x}\\ z\end{pmatrix}$, where $\pmb{x}$ is some vector and $z\geq 1$. Hence, we have that $F'(G) = M'\left(\begin{pmatrix} \pmb{x}\\ z\end{pmatrix}\right) > y$. This means that the output of $F'$ on any graph drawn from $P_2$ is not the correct output for the task.
\end{proof}

\begin{proof}[Proof of \thmref{thm:overfit size node}]
 By the assumption, the output of the task is determined by the $d$-patterns of the nodes. For each node with pattern $p_i$ let $y_i$ be the output of the node prediction task. Define
\begin{equation}\label{eq:d_col_append}
    A=\arg\max_{A':P^{d}_1(A')<\epsilon}P^{d}_2(A')
\end{equation}
By \thmref{thm:overfit} there exists a first order GNN such that for any $d$-pattern $p_i\in A$ gives a wrong label and for every pattern $p_j\in \left(P_1 \cup P_2\right) \setminus A$ gives the correct label. Note that we can use \thmref{thm:overfit} since both $A$ and $\left(P_1 \cup P_2\right) \setminus A$ are finite, because we assumed that distributions on the graphs have finite supports. The 0-1 loss for small and large graphs is exactly $P^{d}_1(A)$ and $ P^{d}_2(A)$ respectively.

\end{proof}

\section{Additional experiments from \secref{sec: size gen problem empirical validation}}\label{append:addtional experiments size gen problem}

\subsection{Experiments on Larger Graphs}
\revision{We conducted the experiment from \figref{fig:validation_plots} (a) with much larger graphs. We used a three-layer GNN and tested on graphs with $n\in[50,500]$ nodes. See \figref{fig:up_to_500} The problem of size generalization persists, where increasing the graph size also significantly increases the loss on the test set. We stress that in all the experiments, the loss on the validation set is effectively zero. }

\subsection{Max-clique Size}
We consider the max-clique problem. The goal of this problem is given a graph, to output the size of the maximal clique. This is in general an NP-hard problem, hence a constant depth GNN will not be able to solve it for graphs of all sizes. For this task we sampled both the train and test graphs from a geometrical distribution, which resembles how graphs are created from point clouds, defined as follows: given a number of nodes $n$ and radius $\rho$ we draw $n$ points uniformly in $[0,1]^2$, each point correspond to a node in the graph, and two nodes are connected if their corresponding points have a distance less than $\rho$. We further analyzed the effects of how the network depth and architecture affect size generalization.

Table \ref{tab:max-clique} presents the test loss on the max-clique problem. Deeper networks are substantially more stricken due to the domain shift. If the test domain has a similar pattern distribution, increasing
the neural network depth from one layer to three layers results in a small decrement of at most $25\%$ to the loss. However, if the pattern distribution is different than the train pattern distribution, such change may increase the loss by more than $5.5\times$. We also show that the problem is consistent in both first-order GNN and GIN architectures.

\subsection{Different Architectures and Node/Graph Level Tasks}

We tested on the following tasks: (1) student-teacher task, on both graph and node levels, (2) per node degree prediction task, and (3) predicting the number of edges in the graph. The goal of these experiments is to show that the size-generalization problem persists on different tasks, different architectures, and its intensity is increased for deeper GNN. In all the experiments we draw the graphs from $G(n,p)$ distribution, wherein the test set $n$ is drawn uniformly between $40$ and $50$, and $p=0.3$, and in the test set $n=100$ and $p$ is either $0.3$ or $0.15$. We note that when $p=0.15$, the average degree of the test graph is equal to (approximately) the average degree of the train graph, while when $p=0.3$ it is twice as large. We would expect that the model will generalize better when the average degree in the train and test set is similar because then their  $d$-patterns will also be more similar.

\begin{wrapfigure}[21]{r}{0.25\textwidth}
\centering
\includegraphics[width=0.25\textwidth]{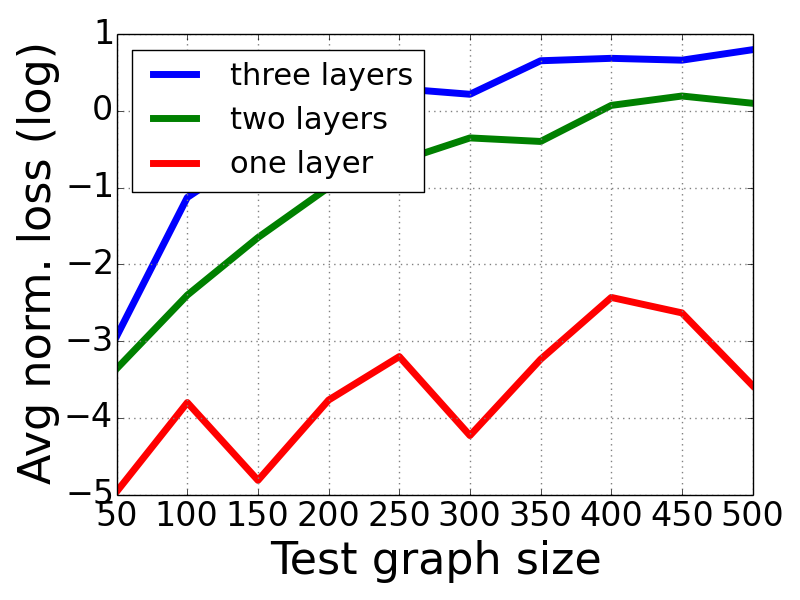}
\caption{Size generalization in PA graphs. The networks were trained on the edge count task, with graphs sampled from a preferential attachment model with $n$ drawn uniformly from $[10,50]$ and $m=4$. Tested on graphs also sampled the preferential attachment model with $n$ varies (x-axis) and $m=4$. For depth 2 and 3 GNN, as the graphs in the test set gets larger, the generalization worsens.} 
\label{fig:PA_exp}
\end{wrapfigure}
Table \ref{tab:student-teacher} compares the performance of these tasks when changing the graph size of the test data. We tested the performance with the squared loss. We note that the GNNs (both first-order GNN and GIN) successfully learned all the tasks presented here on the train distribution, here we present their generalization capabilities to larger sizes.

\subsection{Different data distribution - Preferential Attachment}

We performed an experiment on the preferential attachment graph model. In this model, the graph distribution is determined by two parameters $n$ and $m$. Each graph is created sequentially, where at each iteration a new node is added to the graph, up to $n$ nodes. Each new node is connected to exactly $m$ existing nodes, where the probability to connect to node $v$ is proportional to its degree. This means that higher degree nodes have a higher degree to have more nodes connected to them. 

We trained on the task of edge count (i.e. predicting the number of edges in a graph) for graphs sampled from a preferential attachment model with $n$ uniformly sampled from $[10,50]$ and $m=4$. We tested on graphs also sampled from a preferential attachment model with $n$ nodes for $n$ varying from $50$ to $500$, and $m=4$. Note that this task can be solved efficiently for any graph distribution. \figref{fig:PA_exp} depicts the results. It is clear that for depth 2 and 3 GNNs, as the graph size gets larger, the GNN fails to predict the number of edges. This shows that although for this problem there is a solution that works on all graph sizes, and we trained on graphs of varying sizes, GNNs tend to converge to solutions that do not generalize to larger sizes. 

\subsection{Generalization From Large to Small Graphs}
\revision{We additionally tested size generalization in the opposite direction, i.e. generalizing from large to small graphs. We used the same teacher-student setting as in \figref{fig:validation_plots}, where the graphs are drawn from a $G(n,p)$ distribution with a constant $p=0.3$. We consider three separate training sets with graphs of sizes $[90,100],~[140,150],~[190,200]$ and tested on graphs of sizes $50,~75$. The rows of  Table \ref{tab:large to small} clearly show that when the size difference between the graphs in the train and test sets decreases, the loss also decreases, which is consistent with our theory.}

\begin{figure}
    \centering
    \includegraphics[width=0.45\textwidth]{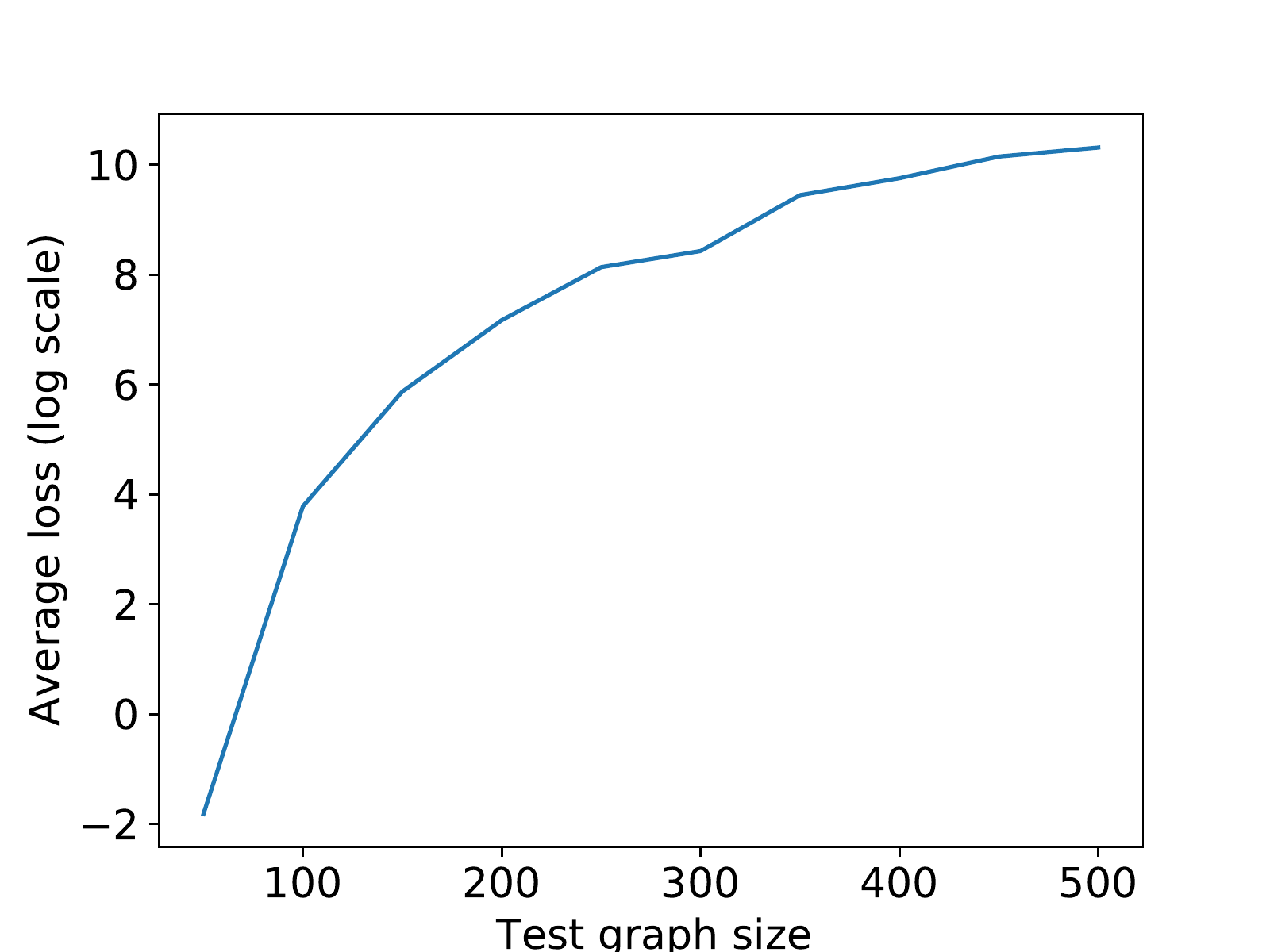}
    \caption{\revision{Extending the experiment in \figref{fig:validation_plots} (a) to larger graphs. We train on graphs with bounded size $n\in [40,50]$ and test on graphs with size up to $n=500$ with constant $p=0.3$.}}
    \label{fig:up_to_500}
\end{figure}

\begin{table*}[t]
\centering
    \footnotesize

\begin{tabular}{|c|c|c|c|c|c|c|}
\hline 
 & \multicolumn{3}{c|}{first order GNN} & \multicolumn{3}{c|}{GIN}\\
\hline 
 $\rho_{train}/\rho_{test}$ & 1-layer & 2-layers & 3-layers & 1-layer & 2-layers & 3-layers\\
\hline 
\hline 
$1$ & $402 \pm 59$ & $926 \pm 245$ & $2325 \pm 3613$ & $367 \pm 56$ & $620 \pm 130$ & $634 \pm 1042$ \\
\hline 
$\sqrt{2}$ & $96 \pm 5$ & $111 \pm 4$ & $119 \pm 5$ & $101 \pm 3$ & $114 \pm 6$ & $123 \pm 14$\\
\hline 
\end{tabular}

\caption{\label{tab:max-clique} The difference is the predicted max clique size under size generalization domain shift. The train domain graphs were constructed by drawing $n\in[40,50]$ points uniformly in the unit square, and connecting two points if their distance is less than $\rho_{train}=0.3$. The test set domain graphs contain $n=100$ nodes, effectively increasing their density by $2$. We tested with two different values of $\rho_{train}/{\rho_{test}}$, the ratio between the train and test connectivity radius. A proper scaling that keeps the expected degree of each node is $\rho = \sqrt{2}$. Here, although proper scaling does not help solve the problem completely, it does improve performance. }
\end{table*}

\begin{table*}[t]
    \footnotesize
    \centering{}

        \begin{tabular}{|c|c|c|c|c|c|}
    \hline 
     &  & \multicolumn{2}{c|}{Node regression} & \multicolumn{2}{c|}{Graph regression}\\
    \hline 
     &  $p$ & Student - Teacher  & Degree & Student - Teacher & Edge count\\
    \hline 
    \hline 
    \multirow{2}{*}{first order GNN \cite{morris2019weisfeiler}} & $0.3$ & $3500 \pm 9240$ & $348 \pm 553$ & $(1.8 \pm 3)\cdot 10^4$ & $(1.8 \pm 2.4)\cdot 10^6$\\
    \cline{2-6} \cline{3-6} \cline{4-6} \cline{5-6} \cline{6-6} 
     & $0.15$ & $\boldsymbol{0.02 \pm 0.05}$ & $\boldsymbol{\left(1.2\pm0.8\right)\cdot10^{-3}}$ & $\boldsymbol{0.04 \pm 0.04}$ & $\boldsymbol{5.1 \pm 2.5}$\\
    \hline 
    \multirow{2}{*}{GIN} & $0.3$ & $1384 \pm 3529$ & $73 \pm 86$ & $5487 \pm 9417$ & $(4.2 \pm 4.3)\cdot 10^5$\\
    \cline{2-6} \cline{3-6} \cline{4-6} \cline{5-6} \cline{6-6} 
     & $0.15$ & $\mathbf{0.96 \pm 0.98}$ & $\mathbf{0.33 \pm 0.04}$ & $\mathbf{0.04 \pm 0.07}$ & $\mathbf{6.7 \pm 5.4}$\\
    \hline 
    \end{tabular}
    \caption{\label{tab:student-teacher} Comparing performance on different local distributions (a) A student-teacher graph regression task; (b) A graph regression task, where the graph label is the number of edges; (c) A student-teacher node regression task; (d) A node regression task, where the node label is its degree. In the edge count/degree tasks the loss is the mean difference from the ground-truths, divided by the average degree/number of edges. In the student-teacher tasks the loss is the mean $L_2$ loss between the teacher's value and the student's prediction, divided by the averaged student's prediction. Both the student and teacher share the same 3-layer architecture}
\end{table*}

\begin{table*}[t]
\centering
    \footnotesize
\begin{tabular}{|c|c|ccc|}
\hline
\multicolumn{2}{|c|}{\multirow{2}{*}{}}&\multicolumn{3}{|c|}{\textbf{Train graph size}} \\\hline
\multicolumn{2}{|c|}{}&$\mathbf{[90,100]}$&$\mathbf{[140,150]}$&$\mathbf{[190,200]}$ \\
    \hline
        \multirow{3}{*}{\textbf{Test  graph  size}}&$\mathbf{50}$&1.87&3.1&4.36\\
        &$\mathbf{75}$&1.93&4.19&4.29\\
    \hline
\end{tabular}

\caption{\label{tab:large to small} \revision{Training on large graphs and testing on smaller graphs. Teacher-student task with graphs drawn from a $G(n,p)$ distribution with $p=0.3$ and $n$ varies, as described in the table. Results are on a logarithmic scale. As expected, even generalizing from large to small graphs is difficult, but the results improve as the sizes of the graphs in the train set are close to the sizes of the graphs in the test set.}}
\end{table*}





\section{SSL task on $d$-pattern  tree}\label{appen:SSL_tree_task}
First, we will need the following definition which constructs a tree out of the $d$-pattern introduced in \secref{sec:local graph patterns}. This tree enables us to extract certain properties for each node which can, later on, be learned using a GNN. This definition is similar to the definition of "unrolled tree" from \cite{morris2019towards}.

\begin{definition}[$d$-pattern tree]
Let $G=(V,E)$ a graph, $C$ a finite set of node features, where each $v\in V$ have a corresponding feature $c_v$, and $d \geq 0$. For a node $v\in V$, its \textbf{$d$-pattern tree} $ T^{(d)}_v=(V^{(d)}_v,E^{(d)}_v)$ is directed tree where each node corresponds to some node in G. It is defined recursively in the following way: For $d=0$, $V^{(0)}_v = u_{(0,v)}$, and $E^{(0)}_v= \varnothing$. Suppose we defined $T^{(d-1)}_v$, and let $\tilde{V}_v^{(d-1)}$ be all the leaf nodes in $V^{(d-1)}_v$ (i.e. nodes without incoming edges). We define:

\begin{align*}
    V^{(d)}_v &= V^{(d-1)}_v \cup \left\{u_{(d,v')}: v'\in \mathcal{N}(v''),~ u'_{(d-1,v'')}\in\tilde{V}^{(d-1)}_v\right\} \\
    E^{(d)}_v &= E^{(d-1)}_v \cup \left\{(u_{(d,v')},u'_{(d-1,v'')}): v'\in \mathcal{N}(v''),~ u'_{(d-1,v'')}\in\tilde{V}^{(d-1)}_v\right\}
\end{align*}
and for every node $u_{(d,v')}\in V_v^{(d)}$, its node feature is $c_{v'}$ - the node feature of $v'$

\end{definition}

The main advantage of pattern trees is that they encode all the information that a GNN can produce for a given node by running the same GNN on the pattern tree. 

This tree corresponds to the local patterns in the following way: the $d$-pattern tree of a node can be seen as a multiset of the children of the root, and each child is a multiset of its children, etc. This completely describes the $d$-pattern of a node. In other words, there is a one-to-one correspondence between $d$-patterns and pattern trees of depth $d$. Thus, a GNN that successfully represents the pattern trees of the target distribution will also successfully represent the $d$-patterns of the target distribution.

Using the $d$-pattern tree we construct a simple regression SSL task where its goal is for each node to count the number of nodes in each layer and of each feature of its $d$-pattern tree. This is a simple descriptor of the tree, which although loses some information about connectivity, does hold information about the structure of the layers. 

For example, in \figref{fig:pattern tree} the descriptor for the tree would be that the root (zero) layer has a single black node, the first layer has two yellow nodes, the second layer has two yellow, two gray, and two black nodes, and the third layer has ten yellow, two black and two gray nodes.




\section{Training Procedure}\label{appen:training procedure}
In this section, we explain in detail the training procedure used in the experiments of \secref{sec:improve size gen}. Let $X_{Main}$ and $X_{SSL}$ be two labeled datasets, the first contains the labeled examples for the main task from the source distribution, and the second contains examples labeled by the SSL task from both the source and target distributions. Let $\ell:\reals\times\reals\rightarrow\reals$ be a loss function, in all our experiments we use the cross-entropy loss for classification tasks and squared loss for regression tasks. We construct the following models:

(1) $f_{GNN}$ is a GNN feature extractor. Its input is a graph and its output is a feature vector for each node in the graph. (2) $h_{Main}$ is a head (a small neural network) for the main task. Its inputs are the node feature and it outputs a prediction (for graph prediction tasks this head contains a global pooling layer). (3) $h_{SSL}$ is the head for the SSL task. Its inputs are node features, and it outputs a prediction for each node of the graph, depending on the specific SSL task used.

\textbf{Pretraining.} Here, there are two phases for the learning procedure. In the first phase, at each iteration we sample a batch $(\bx_1,\by_1)$ from $X_{SSL}$, and train by minimizing the objective: $\ell(h_{SSL}\circ f_{GNN} (\bx_1), \by_1)$. In this phase both $h_{SSL}$ and $f_{GNN}$ are trained. In the second phase, at each iteration we sample $(\bx_2,\by_2)$ from $X_{main}$ and train on the loss  $\ell(h_{Main}\circ f_{GNN} (\bx_2), \by_2)$, where we only train the head $h_{Main}$, while the weights of $f_{GNN}$ are fixed.

\textbf{Multitask training.} Here we train all the functions at the same time. At each iteration we sample a batch $(\bx_1,\by_1)$ from $X_{SSL}$ and a batch $(\bx_2,\by_2)$ from $X_{Main}$ and train by minimizing the objective:
\[
\alpha\cdot\ell(h_{SSL}\circ f_{GNN} (\bx_1), \by_1) + (1-\alpha)\cdot\ell(h_{Main}\circ f_{GNN} (\bx_2), \by_2).
\] 
Here $\alpha\in [0,1]$ is the weight for the SSL task, in all our experiments we used $\alpha = 1/2$.

For an illustration of the training procedures see \figref{fig:SSL training}. These procedures are common practices for training with SSL tasks (see e.g. \cite{you2020does}).

We additionally use a semi-supervised setup in which we are given a dataset $X_{FS}$ of few-shot examples from the target distribution with their correct label. In both training procedures, at each iteration we sample a batch $(\bx_3,\by_3)$ from $X_{FS}$ and add a loss term $\beta\ell(h_{Main}\circ f_{GNN}(\bx_3),\by)$ where $\beta\in [0,1]$ is the weight of the few-shot loss. In pretraining, this term is only added to the second phase, with weight $1/2$ and adjust the weight of the main task to $1/2$ as well (equal weight to the main task). In the multitask setup, we add this term with weight $1/3$ and adjust the weights of the two other losses to $1/3$ as well, so all the losses have the same weight.
\begin{figure}
    \centering
    {\includegraphics[width=0.45\textwidth]{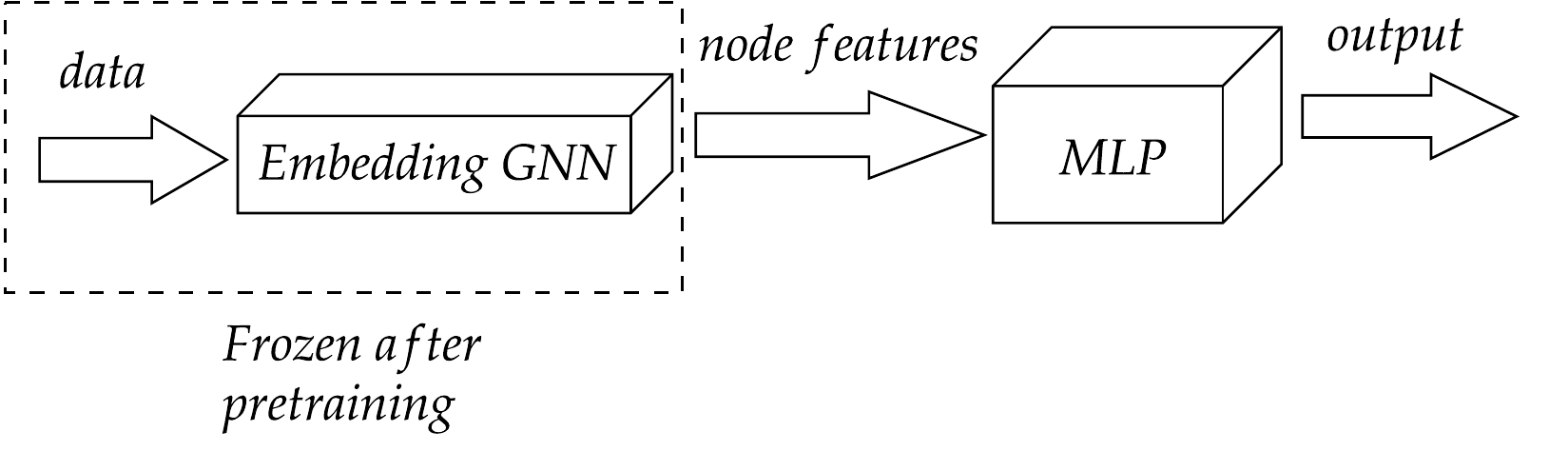}}
    {\includegraphics[width=0.45\textwidth]{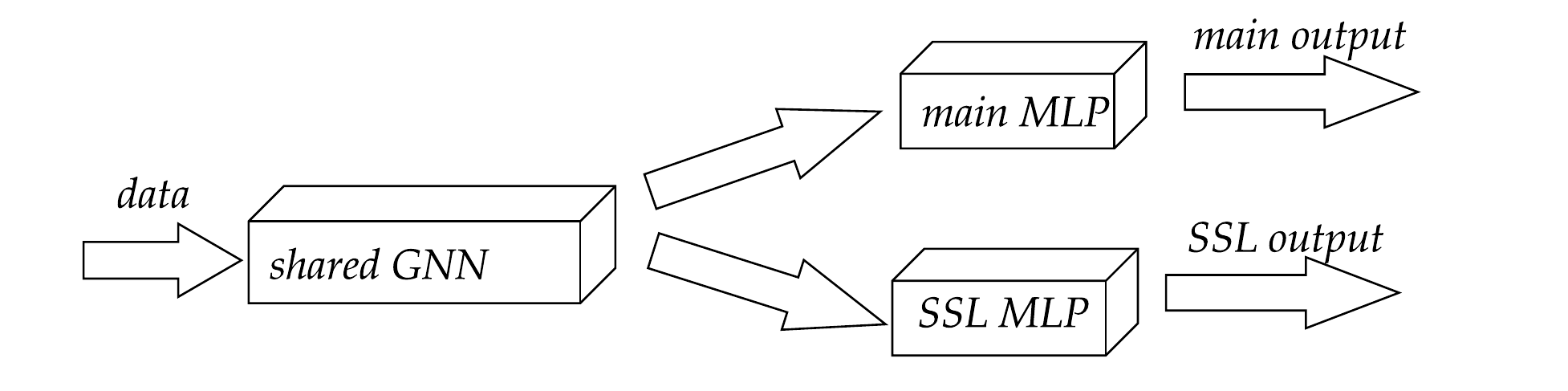}}
    \caption{Two training procedures for learning with SSL tasks. \textbf{Left:} Learning with pretraining: Here, a GNN is trained on the SSL task with a specific SSL head. After training, the weights of the GNN are fixed, and only the main head is trained on the main task. \textbf{Right:} Multitask learning: Here, there is a shared GNN and two separate heads, one for the SSL task and one for the main task. The GNN and both heads are trained simultaneously.}
    \label{fig:SSL training}
\end{figure}

\section{More experiments from \secref{sec:improve size gen}}\label{sec:more experiments from sec solution}
\subsection{Synthetic datasets}
We used the setting of Section \ref{sec: size gen problem empirical validation}. Source graphs were generated with $G(n,p)$ with $n$ sampled uniformly in $[40, 50]$ and $p=0.3$. Target graphs were sampled from $G(n,p)$ with $n=100$ and $p=0.3$. 

Table \ref{tab:synthetic datasets} depicts the results of using the  $d$-patterns SSL tasks, in addition to using the semi-supervised setting. It can be seen that adding the  $d$-patterns SSL task significantly improves the performance on the teacher-student task, although it does not completely solve it. We also observe that adding labeled examples from the target domain significantly improves the performance of all tasks. Note that adding even a single example significantly improves performance. In all the experiments, the network was successful at learning the task on the source domain with less than $0.15$ averaged error, and in most cases much less. 

\figref{fig:embedding-vanilla-vary-p} depicts a side-by-side plot of the 3-layer case of \figref{fig:validation_plots} (d), where training is done on graphs sampled from $G(n,p)$ with $40$ to $50$ nodes and $p=0.3$, and testing on graphs with $100$ nodes and $p$ varies. We compare Vanilla training, and our pattern tree SSL task with pretraining. It is clear that for all values of $p$ our SSL task improves over vanilla training, except for $p=0.15$.

\begin{figure}[t]
    \centering
    \includegraphics[width=0.45\textwidth]{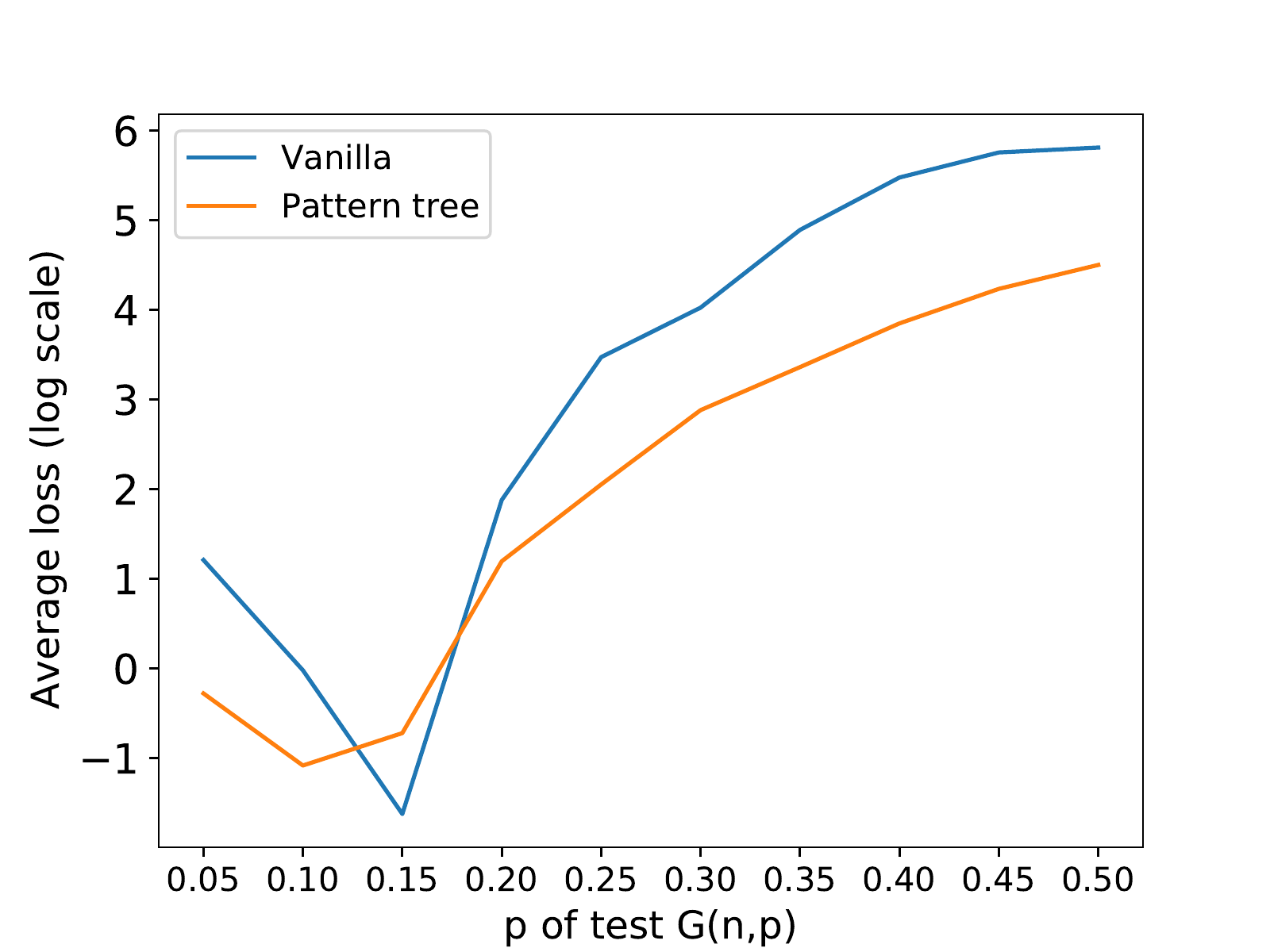}
    \caption{Teacher-student setup with a 3-layer GNN. Training is on graphs drawn i.i.d from $G(n,p)$ with $n\in\{40,...,50\}$ uniformly and $p=0.3$. Testing is done on graphs with $n=100$ and $p$ vary (x-axis). The "Pattern tree" plot represent training with our pattern tree SSL task, using the pretraining setup.}
    \label{fig:embedding-vanilla-vary-p}
\end{figure}

\begin{table*}[t]
    \centering
        \footnotesize
\begin{sc}
    
        \begin{tabular}{|l|l|l|l|l|l|}
    \hline
    \textbf{Tasks}    & \textbf{\# target} & \textbf{Edge} & \textbf{Degree}   & \textbf{Teacher} & \textbf{Teacher student} \\ 
    \textbf{}    & \textbf{samples} & \textbf{count} & \textbf{}   & \textbf{Student} & \textbf{Per node} \\ \hline \hline
\multirow{4}{*}{\textbf{Vanilla}}    & \textbf{0}    & $(1.9 \pm 2.1)\cdot 10^6$     & $363 \pm 476$ & $(3 \pm 5)\cdot 10^4$        
          & $3311 \pm 8813$                 \\ 
          & \textbf{1}    & $679 \pm 1014$     & $0.27 \pm 0.27$   & $53 \pm 98$ & $1.1 \pm 2.8$                   \\ 
          & \textbf{5}    & $95 \pm 105$     &    $ (3.8 \pm 6.1)\cdot 10^{-2}$ & $3.1 \pm 5.4$ & $0.4 \pm 1.3$                   \\ 
          & \textbf{10}   & $43 \pm 39$     & $(1.8 \pm 3.5)\cdot 10^{-2} $   & $25 \pm 75$ & $(0.8 \pm 1.3)\cdot 10^{-1}$                   \\ \hline
\multirow{4}{*}{\textbf{$d$-Pattern PT}} & \textbf{0}    & $(1.9 \pm 1.4)\cdot 10^6$      & $1580 \pm 1912$ 
          & $809 \pm 1360$ & $4.4 \pm 5.5$                    \\ 
          & \textbf{1}    & $2528 \pm 1559$      & $2 \pm 1.2$   & $0.5 \pm 1.5 $ & $(4 \pm 6)\cdot 10^{-3} $                   \\ 
          & \textbf{5}    & $134 \pm 279$     & $0.55 \pm 0.14$  & $0.11 \pm 0.19 $ & $(2 \pm 3)\cdot 10^{-3}$                   \\ 
          & \textbf{10}   & $79 \pm 79$     & $0.55 \pm 0.1$   & $1.4 \pm 4.4$ & $(2\pm 3)\cdot 10^{-3} $                   \\ \hline
    \end{tabular}
    
    \end{sc}
    \caption{Results on synthetic datasets (Vanilla vs. $d$-pattern PT). }
    \label{tab:synthetic datasets}
\end{table*}

\subsection{Max clique}
\revision{We tested our SSL task on the task of finding the max-clique size of a given graph, similar to the experiment presented in Table \ref{tab:max-clique}. For the train distribution, we constructed graphs by drawing $n\in[10,50]$ points uniformly in the unit square and connecting two points if their distance is less than $\rho=0.3$. The test set graphs are drawn similarly, but with $n=100$, effectively increasing the density by 2. We trained on $20,000$ graphs and tested on $2000$ graphs, where the task is to predict the size of the maximal clique in the graph. We used squared loss. We ran the experiment 10 times and averaged the loss over all the experiments.}

\revision{Without our SSL, the squared loss on average is 2325, hence not generalizing well to larger sizes. With our SSL task and using pretraining (PT) the loss on average is $1327$, improving over vanilla training, but still not solving the problem. Using our SSL task with multitask training (MTL) the average loss provided worse results than vanilla training. We note that this is in general an NP-hard problem, hence solving it might require a specific solution, while our SSL is a general framework for improving size generalization.}

\subsection{OGB dataset}
\revision{We additionally tested our framework on 8 tasks from the "ogbg-molpcba" dataset from the OGB dataset collection \cite{hu2020open}. These are binary classification tasks, where we choose tasks for which there is no one class with more than $90\%$ of the samples (i.e. the tasks with the most balanced labels). We compare our SSL task with pretraining vs. vanilla training. We note that since our task assumes only categorical features and bidirectional edges with no features, we did not use the node and edge features. We split the datasets by size, in the same manner, we did in the experiments from Table \ref{tab:real datasets}, where we trained on the $50\%$ smallest graphs and tested on the $10\%$ largest graphs. We report the accuracy. For the results see Table \ref{tab:ogb}. The results are inconclusive due to the lack of features which is valuable information not used by our model. We also tested using the given split from the OGB repository but found the same inconclusive results. In the future, it will be interesting to generalize our method to handle continuous node features and edge features. We hope that by using these features our method could also improve on the size generalization task for datasets from the OGB collection.}

\begin{table*}[t]
    \scriptsize

    \centering{}

 \begin{tabular}{|l|l|l|l|l|l|l|l|l|l|}
\hline
\textbf{Task No.} & \textbf{0} & \textbf{47} & \textbf{93} & \textbf{94} & \textbf{110} & \textbf{111} & \textbf{114} & \textbf{115} & \textbf{average}\\ \hline
\textbf{Pattern PT} & $88.5 \pm 0$ & $78.4 \pm 25$     &  $55.9 \pm 9.2$ & $68.4 \pm 0$   & $73.1 \pm 0$ &     $44.8 \pm 0.2$ &    $84.9 \pm 7.3$ & $73.7 \pm 14.9$   &   $70.9\%$   \\ \hline
\textbf{Vanilla}     &  $88.4\pm 0.4$ & $ 82.1 \pm 23.7$     & $59.1 \pm 5.6$  & $68.4 \pm 0.1$  &  $70.3 \pm 6.1$   &  $44.9 \pm 0$   & $85.78 \pm 4.8$     &  $76.5 \pm 5.7$   & $71.9\%$ \\ \hline
\end{tabular}
    \caption{\label{tab:ogb} \revision{Results on the ogbg-molpcba datasets from the OGB collection. The dataset contains 128 different tasks, we used the tasks with the most balanced labels (there is no class with more than $90\%$ of the samples). In both settings, we emitted the node and edge features as our method does not support edge features and continuous node features (only categorical node features). The results are inconclusive, with a slight edge toward vanilla training.}}
\end{table*}

\subsection{Datasets statistics}\label{appen:datasets statistics}
Table \ref{tab:dataset statistics} shows the statistics of the datasets that were used in the paper. In particular, the table presents the split that was used in the experiments, we trained on graphs with sizes smaller or equal to the 50-th percentile and tested on graphs with sizes larger or equal to the 90-th percentile. Note that all the prediction tasks for these datasets are binary classification. In all the datasets there is a significant difference between the graph sizes in the train and test sets, and in some datasets, there is also a difference between the distribution of the output class in the small and large graphs.
\begin{table*}[t]\label{tab:dataset statistics}
\caption{Dataset statistics.}
\centering
\tiny
\begin{tabular}{llllllllll}
\cline{2-10}
\multicolumn{1}{l|}{}                         & \multicolumn{3}{c|}{\textbf{NCI1}}                                                                                           & \multicolumn{3}{c|}{\textbf{NCI109}}                                                                                         & \multicolumn{3}{c|}{\textbf{DD}}                                                                                             \\ \cline{2-10} 
\multicolumn{1}{l|}{}                         & \multicolumn{1}{l|}{\textbf{all}} & \multicolumn{1}{l|}{\textbf{Smallest 50\%}} & \multicolumn{1}{l|}{\textbf{Largest 10\%}} & \multicolumn{1}{l|}{\textbf{all}} & \multicolumn{1}{l|}{\textbf{Smallest 50\%}} & \multicolumn{1}{l|}{\textbf{Largest 10\%}} & \multicolumn{1}{l|}{\textbf{all}} & \multicolumn{1}{l|}{\textbf{Smallest 50\%}} & \multicolumn{1}{l|}{\textbf{Largest 10\%}} \\ \hline
\multicolumn{1}{|l|}{\textbf{Class A}}        & \multicolumn{1}{l|}{49.95\%}      & \multicolumn{1}{l|}{62.30\%}                & \multicolumn{1}{l|}{19.17\%}               & \multicolumn{1}{l|}{49.62\%}      & \multicolumn{1}{l|}{62.04\%}                & \multicolumn{1}{l|}{21.37\%}               & \multicolumn{1}{l|}{58.65\%}      & \multicolumn{1}{l|}{35.47\%}                & \multicolumn{1}{l|}{79.66\%}               \\ \hline
\multicolumn{1}{|l|}{\textbf{Class B}}        & \multicolumn{1}{l|}{50.04\%}      & \multicolumn{1}{l|}{37.69\%}                & \multicolumn{1}{l|}{80.82\%}               & \multicolumn{1}{l|}{50.37\%}      & \multicolumn{1}{l|}{37.95\%}                & \multicolumn{1}{l|}{78.62\%}               & \multicolumn{1}{l|}{41.34\%}      & \multicolumn{1}{l|}{64.52\%}                & \multicolumn{1}{l|}{20.33\%}               \\ \hline
\multicolumn{1}{|l|}{\textbf{Num of graphs}}           & \multicolumn{1}{l|}{4110}         & \multicolumn{1}{l|}{2157}                   & \multicolumn{1}{l|}{412}                   & \multicolumn{1}{l|}{4127}         & \multicolumn{1}{l|}{2079}                   & \multicolumn{1}{l|}{421}                   & \multicolumn{1}{l|}{1178}         & \multicolumn{1}{l|}{592}                    & \multicolumn{1}{l|}{118}                   \\ \hline
\multicolumn{1}{|l|}{\textbf{Avg graph size}} & \multicolumn{1}{l|}{29}           & \multicolumn{1}{l|}{20}                     & \multicolumn{1}{l|}{61}                    & \multicolumn{1}{l|}{29}           & \multicolumn{1}{l|}{20}                     & \multicolumn{1}{l|}{61}                    & \multicolumn{1}{l|}{284}          & \multicolumn{1}{l|}{144}                    & \multicolumn{1}{l|}{746}                   \\ \hline
                                              &                                   &                                             &                                            &                                   &                                             &                                            &                                   &                                             &                                            \\ \cline{2-10} 
\multicolumn{1}{l|}{}                         & \multicolumn{3}{c|}{\textbf{Twitch\_egos}}                                                                                   & \multicolumn{3}{c|}{\textbf{Deezer\_egos}}                                                                                   & \multicolumn{3}{c|}{\textbf{IMDB-binary}}                                                                                    \\ \hline
\multicolumn{1}{|l|}{}                        & \multicolumn{1}{l|}{\textbf{all}} & \multicolumn{1}{l|}{\textbf{Smallest 50\%}} & \multicolumn{1}{l|}{\textbf{Largest 10\%}} & \multicolumn{1}{l|}{\textbf{all}} & \multicolumn{1}{l|}{\textbf{Smallest 50\%}} & \multicolumn{1}{l|}{\textbf{Largest 10\%}} & \multicolumn{1}{l|}{\textbf{all}} & \multicolumn{1}{l|}{\textbf{Smallest 50\%}} & \multicolumn{1}{l|}{\textbf{Largest 10\%}} \\ \hline
\multicolumn{1}{|l|}{\textbf{Class A}}        & \multicolumn{1}{l|}{46.23\%}      & \multicolumn{1}{l|}{39.05\%}                & \multicolumn{1}{l|}{58.07\%}               & \multicolumn{1}{l|}{56.80\%}      & \multicolumn{1}{l|}{44.78\%}                & \multicolumn{1}{l|}{64.97\%}               & \multicolumn{1}{l|}{50.00\%}      & \multicolumn{1}{l|}{48.98\%}                & \multicolumn{1}{l|}{55.55\%}               \\ \hline
\multicolumn{1}{|l|}{\textbf{Class B}}        & \multicolumn{1}{l|}{53.76\%}      & \multicolumn{1}{l|}{60.94\%}                & \multicolumn{1}{l|}{41.92\%}               & \multicolumn{1}{l|}{43.19\%}      & \multicolumn{1}{l|}{55.21\%}                & \multicolumn{1}{l|}{35.02\%}               & \multicolumn{1}{l|}{50.00\%}      & \multicolumn{1}{l|}{51.01\%}                & \multicolumn{1}{l|}{44.44\%}               \\ \hline
\multicolumn{1}{|l|}{\textbf{Num of graphs}}           & \multicolumn{1}{l|}{127094}       & \multicolumn{1}{l|}{65016}                  & \multicolumn{1}{l|}{14746}                 & \multicolumn{1}{l|}{9629}         & \multicolumn{1}{l|}{4894}                   & \multicolumn{1}{l|}{968}                   & \multicolumn{1}{l|}{1000}         & \multicolumn{1}{l|}{543}                    & \multicolumn{1}{l|}{108}                   \\ \hline
\multicolumn{1}{|l|}{\textbf{Avg graph size}} & \multicolumn{1}{l|}{29}           & \multicolumn{1}{l|}{20}                     & \multicolumn{1}{l|}{48}                    & \multicolumn{1}{l|}{23}           & \multicolumn{1}{l|}{13}                     & \multicolumn{1}{l|}{68}                    & \multicolumn{1}{l|}{19}           & \multicolumn{1}{l|}{13}                     & \multicolumn{1}{l|}{41}                    \\ \hline
                                              &                                   &                                             &                                            &                                   &                                             &                                            &                                   &                                             &                                            \\ \cline{2-4}
\multicolumn{1}{l|}{}                         & \multicolumn{3}{c|}{\textbf{PROTEINS}}                                                                                       &                                   &                                             &                                            &                                   &                                             &                                            \\ \cline{2-4}
\multicolumn{1}{l|}{}                         & \multicolumn{1}{l|}{\textbf{all}} & \multicolumn{1}{l|}{\textbf{Smallest 50\%}} & \multicolumn{1}{l|}{\textbf{Largest 10\%}} &                                   &                                             &                                            &                                   &                                             &                                            \\ \cline{1-4}
\multicolumn{1}{|l|}{\textbf{Class A}}        & \multicolumn{1}{l|}{59.56\%}      & \multicolumn{1}{l|}{41.97\%}                & \multicolumn{1}{l|}{90.17\%}               &                                   &                                             &                                            &                                   &                                             &                                            \\ \cline{1-4}
\multicolumn{1}{|l|}{\textbf{Class B}}        & \multicolumn{1}{l|}{40.43\%}      & \multicolumn{1}{l|}{58.02\%}                & \multicolumn{1}{l|}{9.82\%}                &                                   &                                             &                                            &                                   &                                             &                                            \\ \cline{1-4}
\multicolumn{1}{|l|}{\textbf{Num of graphs}}           & \multicolumn{1}{l|}{1113}         & \multicolumn{1}{l|}{567}                    & \multicolumn{1}{l|}{112}                   &                                   &                                             &                                            &                                   &                                             &                                            \\ \cline{1-4}
\multicolumn{1}{|l|}{\textbf{Avg graph size}} & \multicolumn{1}{l|}{39}           & \multicolumn{1}{l|}{15}                     & \multicolumn{1}{l|}{138}                   &                                   &                                             &                                            &                                   &                                             &                                            \\ \cline{1-4}
\end{tabular}
\end{table*}



\subsection{Correlation Between Size Discrepancy and $d$-pattern Discrepancy}\label{appen:counting patterns}

In this section, we show that in many real datasets, there is a high correlation between the sizes of the graphs and their $d$-patterns. This motivates the effect that $d$-patterns have on the ability of GNNs to generalize to larger sizes than they were trained on. 

We focused on the datasets that were tested in \secref{sec:improve size gen}: \textbf{Biological datasets:} NCI1, NCI109, D \& D, Proteins. \textbf{Social networks:} IMDB-Binary, Deezer ego nets, Twitch ego nets. To this end, we split each dataset to the $50\%$ smallest graphs and $10\%$ largest graphs. We calculated the distribution of $2$-patterns for the $20$ most common two patterns in each split (smallest and largest graphs) and compared these two distributions. The results are depicted in \figref{fig:2-pattern hist} (The plot for the Twitch dataset similar to the one of Deezer, where the distributions of $2$-patterns are disjoint).

It is clear that for the NCI1, NCI109, and D \& D datasets there is a very high overlap of $2$-patterns between small and large graphs. Indeed, in the result from Table \ref{tab:real datasets} it can be seen that adding an SSL task (and specifically our tree pattern task) does not improve significantly over vanilla training (except for NCI109). On the other hand, for the other datasets, there is a very clear discrepancy between the $2$-patterns of small and large graphs. Indeed for these datasets, our SSL task improved over vanilla training. For the social network datasets, it is even more severe, where there is almost no overlap between the $2$-patterns, meaning that the small and large graphs have very different local structures. We also calculated the total variation distance between the distributions for every dataset, this appears in the first row of Table \ref{tab:real datasets}.

\begin{figure}[ht]
    \centering
    {\includegraphics[width=0.45\textwidth]{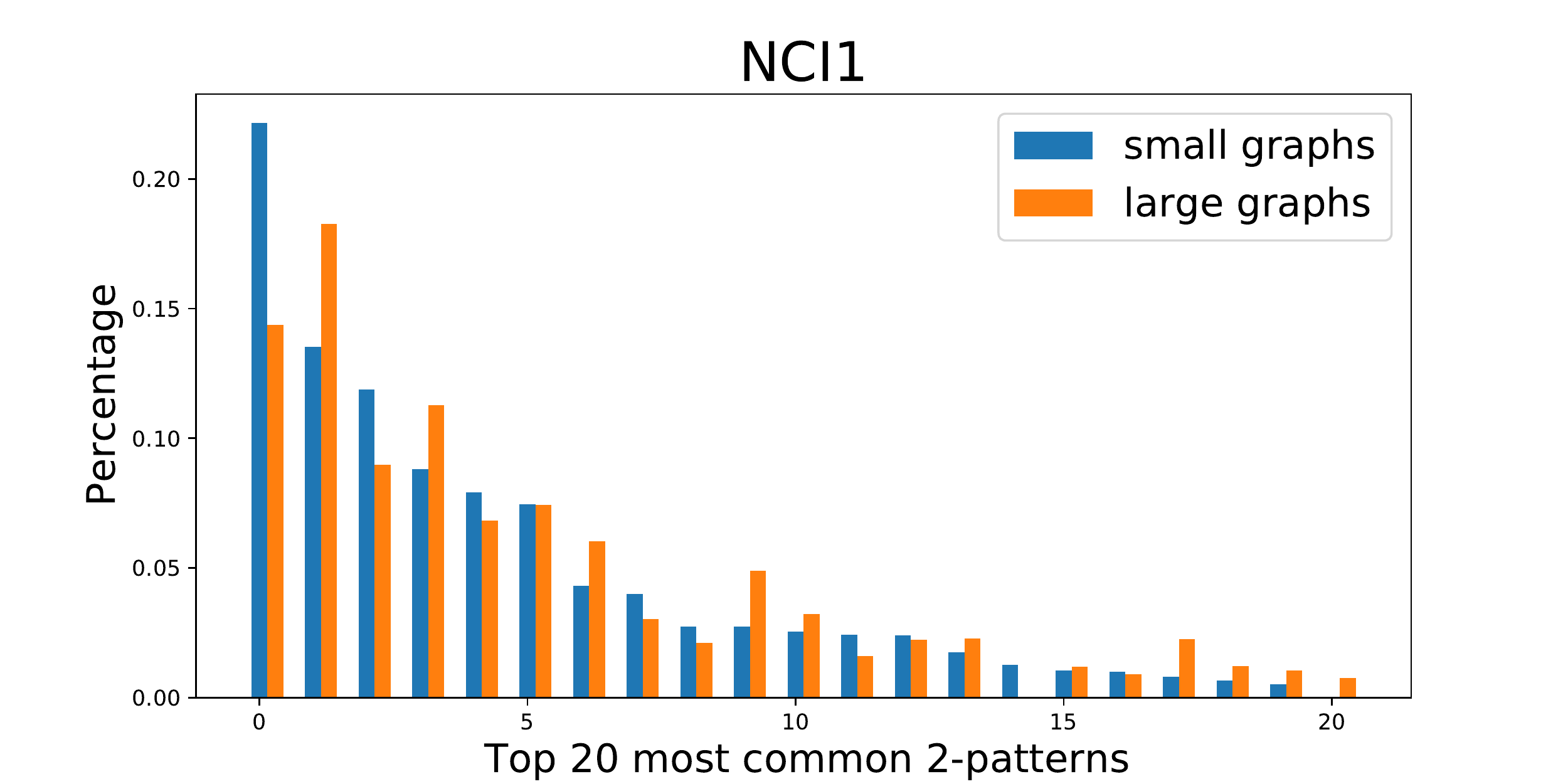}}
    {\includegraphics[width=0.45\textwidth]{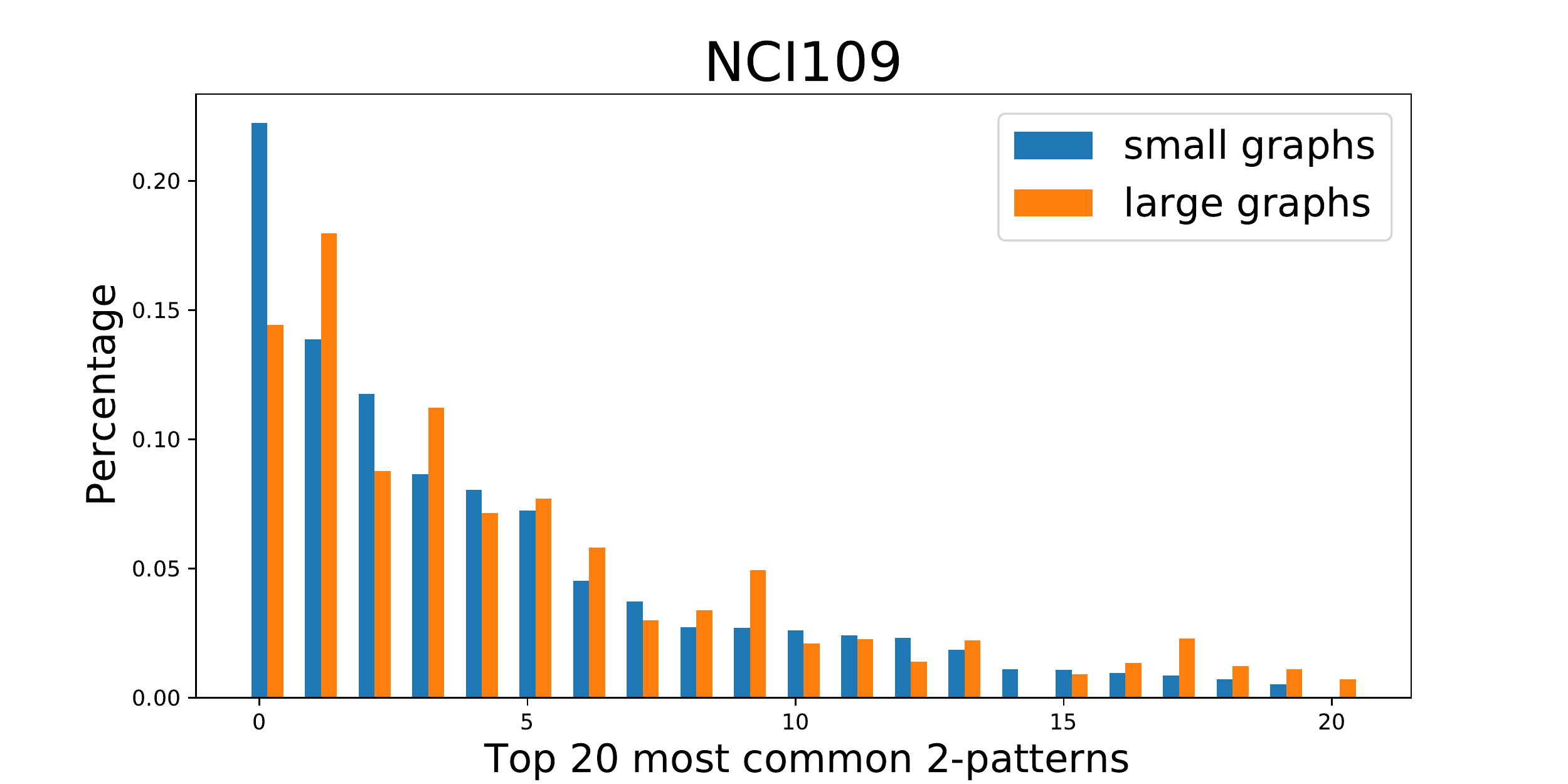}}
    {\includegraphics[width=0.45\textwidth]{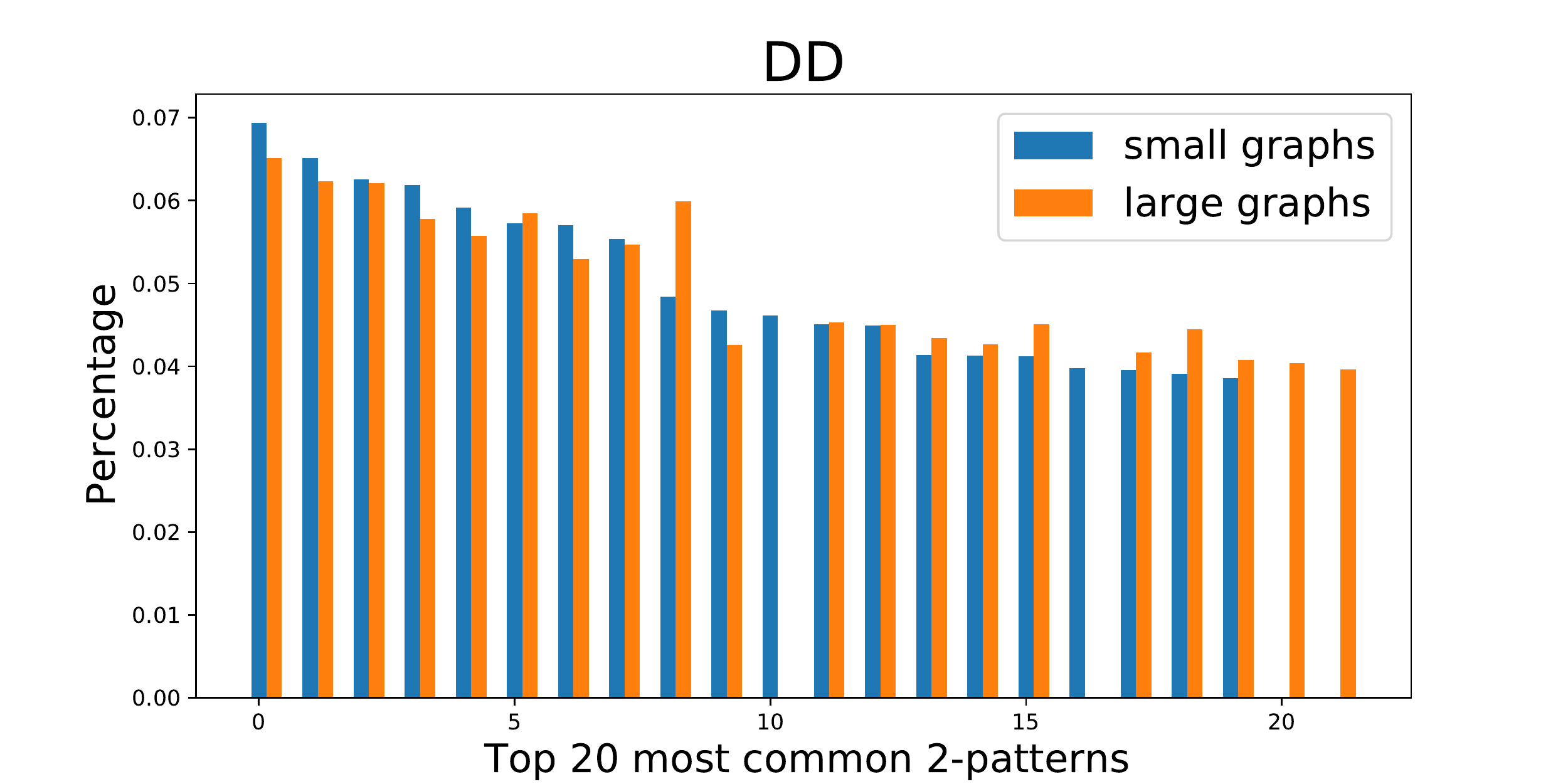}}
    {\includegraphics[width=0.45\textwidth]{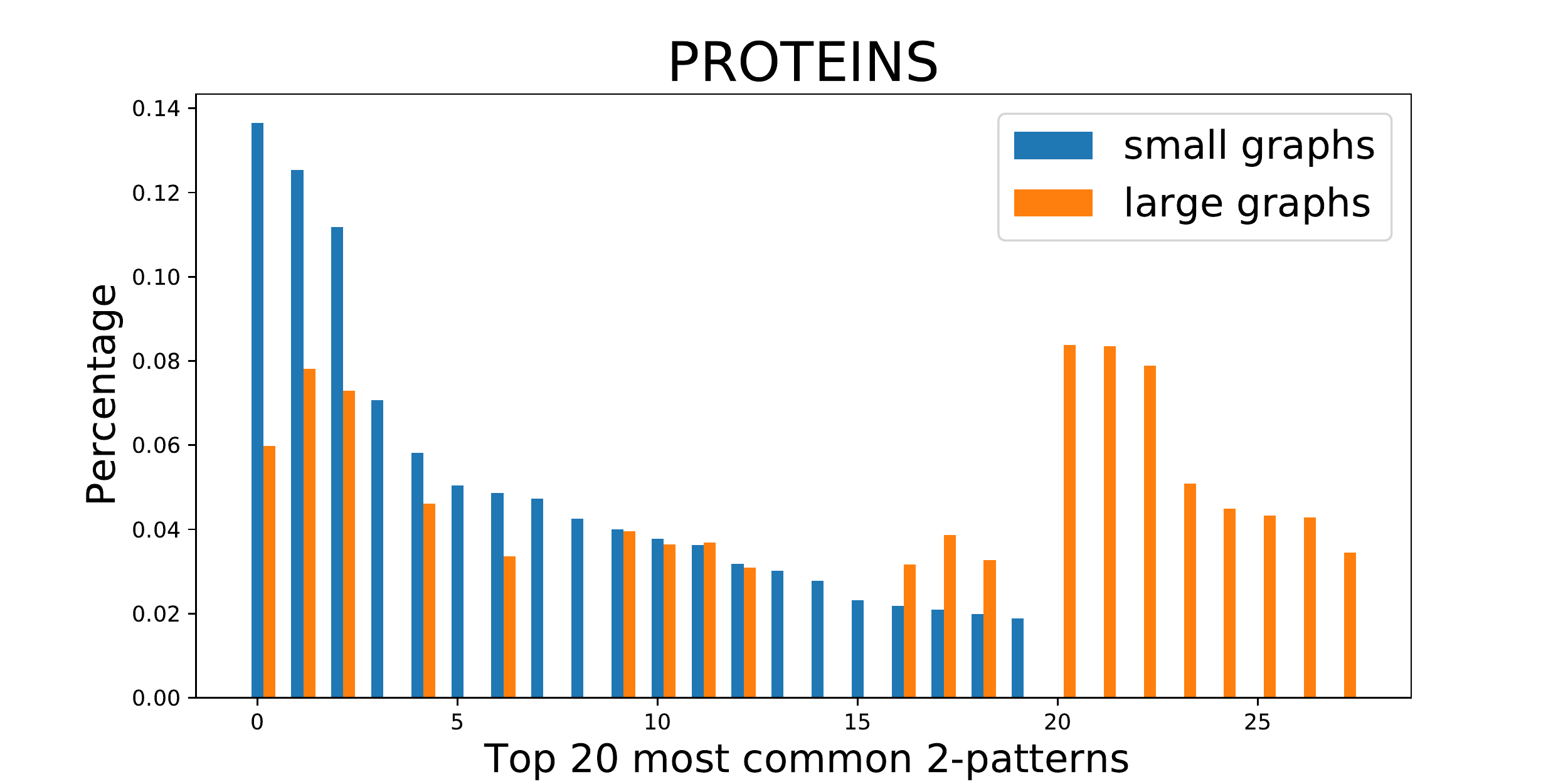}}
    {\includegraphics[width=0.45\textwidth]{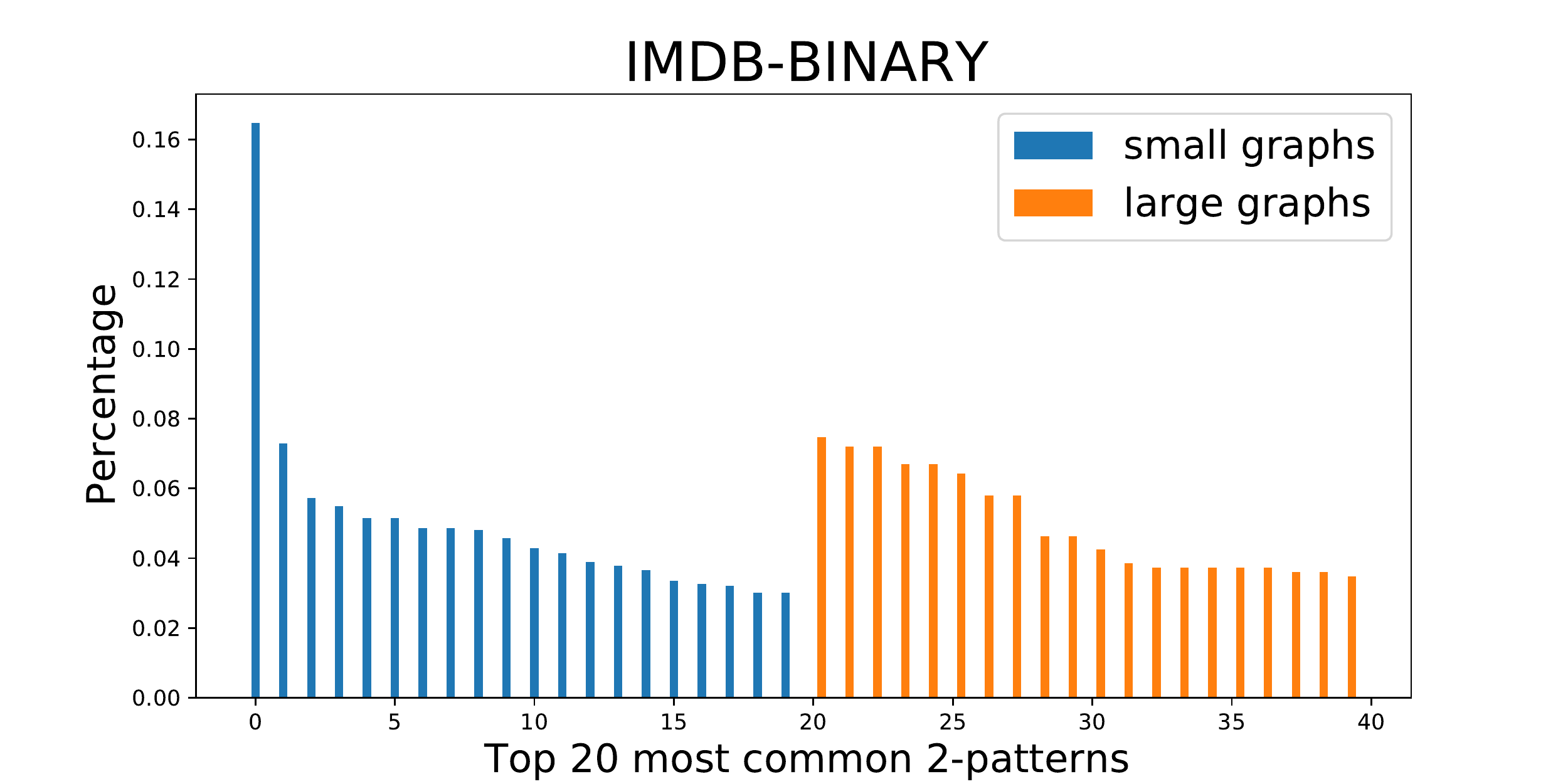}}
    {\includegraphics[width=0.45\textwidth]{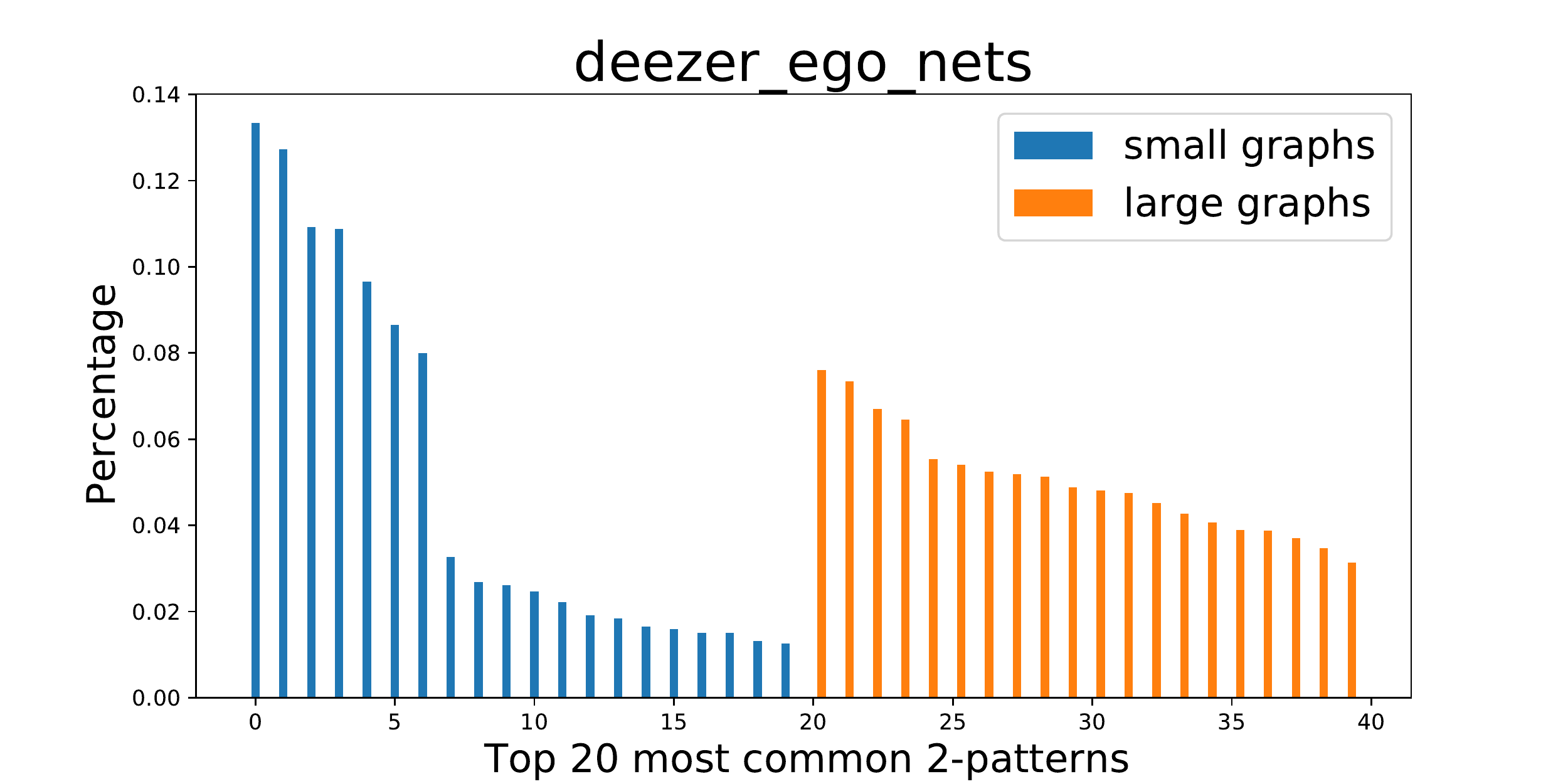}}
    \caption{{Histograms in percentage of $2$-patterns of graphs. We used the 50\% smallest graphs and 10\% largest graphs in each dataset.(same as the split in the experiment from \secref{sec:improve size gen}). The X-axis represent the 20 most common $2$-patterns from each split, and the y-axis their percentage of appearance. The x-axis contain from 20 to 40 bars - given by how much overlap of $2$-patterns there is between small and large graphs.}
    \label{fig:2-pattern hist}}
\end{figure}

\begin{table*}[t]
    \centering
    \scriptsize\begin{sc}
    \begin{tabular}{|l|l|l|l|l|l|}
    \hline
    \textbf{Datasets}    & \textbf{Total-Variation Distance} \\ \hline \hline
     D \& D & 0.15\\ \hline
     NCI1 & 0.16\\ \hline
     NCI109 & 0.16\\ \hline
     Proteins & 0.48\\ \hline
     IMDB-Binary & 0.99\\ \hline
     Deezer ego nets & 1\\ \hline
     Twitch ego nets & 1\\ \hline
    \end{tabular}
    \end{sc}
    \caption{Total variation distance, between the $2$-patterns of the 50\% smallest graphs and 10\% largest graphs for all the real datasets that we tested on. }
    \label{tab:TV distance}
\end{table*}

\end{document}